\documentclass{article}
\usepackage[preprint]{neurips_2026}
\usepackage[utf8]{inputenc}
\usepackage[T1]{fontenc}
\usepackage[colorlinks=true, citecolor=blue, linkcolor=red, urlcolor=blue]{hyperref}
\usepackage{url}
\usepackage{booktabs}
\usepackage{amsfonts}
\usepackage{nicefrac}
\usepackage{microtype}
\usepackage{tikz}
\usetikzlibrary{shadows}
\usetikzlibrary{backgrounds}
\usetikzlibrary{arrows.meta}
\usepackage{amsmath, amssymb, amsthm}
\usepackage[dvipsnames]{xcolor}
\newtheorem{definition}{Definition}
\newtheorem{theorem}{Theorem}
\newtheorem{lemma}{Lemma}

\newtheorem{corollary}{Corollary}
\newtheorem*{theorem*}{Theorem}
\newtheorem*{informalthm*}{Informal Theorem}
\theoremstyle{remark}
\newtheorem{remark}{Remark}

\usepackage{enumitem}
\setlist[itemize]{leftmargin=1.15em,labelsep=0.35em,itemsep=0pt,topsep=2pt,parsep=0pt,partopsep=0pt}
\setlist[enumerate]{leftmargin=1.55em,labelsep=0.35em,itemsep=0pt,topsep=2pt,parsep=0pt,partopsep=0pt}

\allowdisplaybreaks
\definecolor{mygreen}{HTML}{1d9954}

\usepackage{mathtools}
\mathtoolsset{showonlyrefs}
\providecommand{\lrp}[1]{\ensuremath{\left( #1 \right)}}

\providecommand{\lrs}[1]{\ensuremath{\left[ #1 \right]}}
\providecommand{\lrc}[1]{\ensuremath{\left\{ #1 \right\}}}

\providecommand{\E}[1]{\ensuremath{\mathbb{E} \lrs{#1}}}
\providecommand{\BCA}[1]{\mathrm{BCA}\lrp{#1}}

\providecommand{\ceil}[1]{\ensuremath{\big\lceil #1 \big\rceil}}

\providecommand{\cB}{\mathcal{B}}
\providecommand{\cC}{\mathcal{C}}

\usepackage{graphicx}
\usepackage{float}
\usepackage{subcaption}
\usepackage{xfrac}
\usepackage[inkscapepath=build/svg-inkscape/]{svg}

\title{Beyond Fixed Points: Superpolynomial Capacity of Asymmetric Hopfield Networks}

\author{
\textbf{Aakash Kumar}$^{1}$ \and
\textbf{Anatoly Khina}$^{2}$ \and
\textbf{Frederik Mallmann-Trenn}$^{3,4}$ \and
\textbf{Emanuele Natale}$^{1}$ \\[1ex]
$^{1}$COATI, CNRS, Inria, I3S, Université Côte d'Azur, France \\
$^{2}$School of Electrical and Computer Engineering, Tel Aviv University, Israel \\
$^{3}$Department of Informatics, King's College London, UK \\
$^{4}$Institute of Science and Technology Austria (ISTA), Klosterneuburg, Austria \\[1ex]
\texttt{aakash.kumar@inria.fr} \quad
\texttt{anatolyk@tau.ac.il} \\
\texttt{frederik.mallmann-trenn@kcl.ac.uk} \\
\texttt{emanuele.natale@inria.fr}
}

\hypersetup{
    pdftitle={Beyond Fixed Points: Superpolynomial Capacity of Asymmetric Hopfield Networks},
    pdfauthor={Aakash Kumar, Anatoly Khina, Frederik Mallmann-Trenn, Emanuele Natale}
}

\begin{document}
\maketitle

\begin{abstract}
    Classical Hopfield networks are limited to static patterns due to symmetric weights, whereas asymmetric networks can encode temporal sequences via limit-cycle attractors.
Achieving high-capacity storage of long sequences in classical synchronous asymmetric networks, however, has remained a challenge.
We present a simple and robust construction within the classical asymmetric Hopfield model with binary neurons and synchronous updates, that allows $n$ neurons to support $\exp\!\big(\Omega(n/(\log n)^2)\big)$ distinct limit-cycle attractors, each with period $\exp\!\big(\Omega(\sqrt n/\log n)\big)$ and robust to random noise with flip probability up to $\frac12-o(1)$, yielding superpolynomial capacity in both the number and length of stored sequences.
This is the first demonstration of such capacity for asymmetric Hopfield networks, which we obtain by combining results from combinatorics, number theory and the analysis of opinion dynamics.
Our findings show that synchronous asymmetric Hopfield networks possess a sequence-memory capacity which is larger and more robust than previously recognized, demonstrating that, in both biological and artificial neural systems, robust sequence representation can be achieved through coarse architectural motifs rather than complex nonlinearities.

\end{abstract}

\section{Introduction}
    \label{sec:intro}
    Associative memory, the ability to retrieve complete patterns from partial or noisy cues, is a fundamental concept in both neuroscience and artificial intelligence. The Hopfield network, a recurrent neural network model, stands as a canonical example of how collective neuronal dynamics can give rise to associative memory \citep{hopfield1982neural}.
In its classic formulation, the network is characterized by a symmetric weight matrix. This symmetry guarantees the existence of a Lyapunov energy function, ensuring that the network's state always converges to a stable fixed point.\footnote{If the update rule is synchronous, the network may exhibit limit cycles of period two; however, limit cycles of higher period cannot occur.} 
These fixed points act as stored memories, making the symmetric Hopfield network an effective model for static pattern recognition.

\noindent However, the constraint of symmetry, while ensuring stability, fundamentally limits the network's dynamical repertoire to point attractors (see Section \ref{sec:hopfield_intro}). Many cognitive processes, such as motor control, central pattern generation, and sequential reasoning, are inherently dynamic and involve recalling and generating temporal sequences rather than static patterns.
By allowing for asymmetric connections, asymmetric Hopfield networks break the limitations imposed by a global energy function.
This asymmetry opens the door to a richer set of dynamics, including the emergence of limit cycles, i.e., periodic sequences of states that can serve as attractors.
These limit-cycle attractors, formalized in Section~\ref{sec:hopfield_intro} (Definition \ref{def:limit_cycle}), provide a natural mechanism for storing and retrieving temporal sequences. 
For instance, in biological systems, high-capacity limit cycles can act as robust Central Pattern Generators (CPGs) for complex rhythmic motor control, while in artificial intelligence, they can serve as high-dimensional ``clocks'' for temporal reasoning and sequence timing.
While the potential for asymmetric (or directed) networks to store sequences is known, and various construction methods have been proposed \citep{zhang2013storing, muscinelli2017exponentially}, the question of their storage capacity remains a critical area of investigation. \cite{muscinelli2017exponentially} demonstrated that asymmetric Hopfield networks with $n$ neurons can support limit-cycle attractors of length $2^n$, thereby realizing dynamics that traverse the entire state space.
However, this result relies on a highly specialized construction in which the weights are carefully engineered on a case-by-case basis.
As a consequence, while theoretically striking, the model is unlikely to provide a realistic account of memory storage in biological or practical settings.
A further challenge for such constructions is robustness.
In practical settings, one would expect the dynamics to reliably return to the same limit-cycle attractor after perturbations induced by noise.
However, if the dynamics traverse the entire state space, robustness is fundamentally unattainable.
Recent advances have pushed the capacity of Hopfield-type networks by modifying the standard model with nonlinear interaction terms or continuous states \citep{krotov2021large, ramsauer2021hopfield, chaudhry2023long}. This raises a key question:

\begin{center}
    \begin{minipage}{0.94\linewidth}
        \itshape
        How can one construct, within the classical synchronous asymmetric Hopfield framework, a model that not only encodes temporal sequences but also supports the reliable storage of a large number of long sequences, while remaining robust to noise and simple enough to be compatible with what could be realized through biological developmental processes?
    \end{minipage}
\end{center}

\noindent This paper addresses this challenge directly.
We introduce a novel construction for asymmetric Hopfield networks based on a very simple network topology, presented in Section~\ref{sec:main}.
We show that this architecture supports a superpolynomial number of distinct limit-cycle attractors (with respect to the number of neurons), with periods that can themselves be superpolynomial in the network size. Moreover, the proposed construction is inherently robust to even extreme levels of noise, admitting non-trivial basins of attraction around each cycle. Finally, we argue in Section \ref{sec:sparse} that the model satisfies natural biological desiderata: beyond coarse architectural constraints, the result is provably preserved under highly sparsified network topologies and addition of adversarial connections.
Throughout, we work with the parallel update rule introduced by Little \citep{little1974persistent}; we do not analyze the asynchronous single-neuron dynamics more commonly associated with the original symmetric Hopfield model, and extending comparable guarantees to that regime remains open.
This should be viewed as a classical variant of the same binary-threshold architecture rather than as a modern modification of the model: Little's rule uses the same linear local fields and sign activation, differing only in the update schedule. Since our goal is to study deterministic sequence storage through periodic attractors, the synchronous regime is also the natural classical setting in which those limit cycles appear as first-class dynamical objects.
We now present an informal statement of our main results.

\begin{informalthm*}[Informal statement of Theorems~\ref{thm:main} and~\ref{thm:main_reg_sparse}]
    There exists an asymmetric Hopfield network with $n$ neurons that supports $\exp \big(\Omega\big(\frac{n}{(\log n)^2}\big)\big)$ distinct limit-cycle attractors, each with period $\exp \big(\Omega(\sqrt n/\log n)\big)$.
    Moreover, these limit cycles remain robust under independent flips with probability up to $\frac12-\Omega\big(\frac 1{\sqrt{\log n}}\big)$, and hence up to $\frac12-o(1)$, \emph{(Theorem \ref{thm:main})}. Furthermore, this robustness extends to highly sparsified network topologies, with only a modest reduction in capacity and period length, both of which remain superpolynomial in $n$ \emph{(Theorem \ref{thm:main_reg_sparse})}; the corresponding quantitative noise thresholds are made explicit in the proofs.
\end{informalthm*}

\noindent We further show in Section \ref{sec:sparse} (Theorem \ref{thm:adv}) that the construction is robust to addition of adversarial connections in the graph, which strengthens the case for biological plausibility. 
In Appendix \ref{sec:main_results}, we also record complementary global convergence statements: in the dense construction every initial configuration rapidly reaches some limit cycle deterministically, in the sparse construction the same holds with high probability, and under adversarial block perturbations the trajectory reaches the corresponding weak limit cycle. 
We also provide empirical support to our results through experiments in Section \ref{sec:Expt}. Our main contributions are summarized below.
\begin{itemize}
\item
    \textbf{Classical Synchronous Asymmetric Hopfield Model:} We demonstrate that it is possible to achieve very large sequence memory capacity without altering the classical asymmetric Hopfield network model.
    Our construction uses standard binary neurons and the synchronous update rule \citep{little1974persistent}, in contrast to recent works that rely on nonlinear interactions or continuous states. 
    Our block-based architectural motif suggests that enforcing coarse block-cyclic topologies in modern ML sequence models (e.g., structured state-space models or sparse RNNs) could inherently boost robust sequence capacity.
\item
    \textbf{Superpolynomial Capacity and Period Length:} We provide a constructive proof that our network architecture can store a superpolynomially large number of distinct limit-cycle attractors, where the period of these attractors is also superpolynomially long in the number of neurons.
\item
    \textbf{High Simplicity and Inherent Robustness:} Up to coarse architectural constraints, the result is preserved under highly sparsified connectivity and addition of adversarial connections,  enhancing biological plausibility relative to highly fine-tuned alternatives. Moreover, our construction is robust, ensuring stable retrieval even under extreme perturbations by random flips with probability up to $\frac12-\Omega(\sfrac 1{\sqrt{\log n}})$ in the dense case, and more generally up to $\frac12-o(1)$ once the block size satisfies $d=\omega(\log n)$.
\end{itemize}

\section{Related Work}
\vspace{-0.7em}
    \label{sec:related_work}
    The study of sequence memory in recurrent neural networks has a rich history. Within the context of Hopfield-type networks, the move from symmetric to asymmetric weight matrices was the crucial step that enabled the modeling of temporal dynamics. Early work focused on Hebbian-style learning rules to embed sequences, but often suffered from limited capacity.

Classic results on the capacity of asymmetric Hopfield models include \cite{bastollaAttractorsFullyAsymmetric1997}, who showed that the dynamics can exhibit $O(n)$ limit cycles with periods exponential in $n$. This picture was refined by \cite{hwangNumberLimitCycles2019}, who (informally) characterized the distribution of these long cycles. In a different regime, \cite{hwangNumberLimitCycles2020} demonstrated that an exponential number of limit cycles of constant length $O(1)$ can arise.

\noindent More targeted construction methods have been developed to store specific cycles. For instance, \cite{zhang2013storing} studied when a prescribed cycle is admissible, constructed corresponding connectivity matrices via the pseudoinverse learning rule, and analyzed the topology of the resulting networks. Closer to our work, \cite{muscinelli2017exponentially} demonstrated a method to construct a standard asymmetric Hopfield network that produces a stable orbit of maximal length ($2^n$), covering the entire state space.
While this highlights the potential for long sequences, our approach differs by focusing on the combinatorial generation of a large number of distinct, long limit-cycle attractors.
Unlike the single maximal-length orbit, which requires highly specific weight assignments, our construction relies on coarse architectural constraints rather than precise tuning, making it more appealing from the standpoint of biological plausibility.

\noindent Separately, a recent revolution in associative memory has come from ``Modern Hopfield networks" or ``Dense Associative Memories" \citep{krotov2021large, ramsauer2021hopfield}. These models modify the energy function of the classical Hopfield network model, often by introducing higher-order or non-polynomial interactions. This modification allows for a storage capacity for static patterns that is exponential in the number of neurons, a dramatic improvement over the linear capacity of the original model. These modern Hopfield networks have also been shown to be equivalent to the attention mechanism in Transformers \citep{ramsauer2021hopfield}.

\noindent Naturally, an effort has been made to extend the benefits of these high-capacity models to the domain of sequence memory. \cite{chaudhry2023long} adapted the Dense Associative Memory framework to store and retrieve long sequences, achieving a significantly improved sequence capacity by leveraging a nonlinear interaction term. A key distinction of our work is that we do not modify the fundamental neuron model or its update rule.
We remain within the classical framework of the synchronous asymmetric Hopfield network with binary neurons and majority vote update rule. Our contribution is to show that, through the realization of certain families of graph topologies, it is possible to achieve a large number of long periodic sequences without resorting to the nonlinearities or continuous states that characterize modern Hopfield networks. A detailed side-by-side comparison is deferred to Table~\ref{tab:related-work-comparison} in Appendix~\ref{app:comparison-experiments}.

\section{Setup: Asymmetric Hopfield Networks}
    \label{sec:hopfield_intro}
    We begin by introducing the basic framework of asymmetric Hopfield networks and the dynamical notions relevant to our analysis. For convenience, Table~\ref{tab:notation-summary} in Appendix~\ref{app:comparison-experiments} collects the recurring notation used throughout the paper.

An asymmetric Hopfield network consists of $n$ binary neurons with state vector $x(t) \in \{-1,+1\}^n$ evolving in discrete time. The network is specified by a (not necessarily symmetric) weight matrix $W \in \mathbb{R}^{n \times n}$, where $w_{ij}$ represents the influence of neuron $j$ on neuron $i$, and we assume $w_{ii} = 0$ for all $i$. The input to neuron $i$ at time $t$ is
$h_i(t) = \sum_{j=1}^n w_{ij} x_j(t)$. 
We consider synchronous majority-rule dynamics, where all neurons update simultaneously according to 
$x_i(t+1) = \operatorname*{sign}(h_i(t))$, with the convention that $\operatorname{sign}(0) = x_i(t)$, so that neurons maintain their state in the case of a tie. 
This defines a deterministic update map $F : \{-1,+1\}^n \to \{-1,+1\}^n$.

\begin{definition}[Orbit]
Let \(F:\{-1,+1\}^n\to\{-1,+1\}^n\) denote the synchronous update map of the network. The \emph{orbit} or \emph{trajectory} starting from an initial state \(x\in\{-1,+1\}^n\) is the sequence
\(x,\;F(x),\;F^2(x),\;F^3(x),\ldots\)
\end{definition}

\begin{definition}[Limit cycle and basin of attraction]
\label{def:limit_cycle}
A \emph{limit cycle} of period \(P\geq 1\) is a set of distinct states \(\cC = \{x^{(0)},x^{(1)},\ldots,x^{(P-1)}\}\) such that, for the synchronous update map $F: \lrc{-1, 1}^n \to \lrc{-1, 1}^n$, we have \(F(x^{(i)})=x^{(i+1)}\) for \(i=0,\ldots,P-2\) and \(F(x^{(P-1)})=x^{(0)}\). The special case \(P=1\) is a fixed point. The \emph{basin of attraction} of \(\cC\) is \(\cB(\cC)=\{y\in\{-1,+1\}^n: \exists T \geq 0 \text{ such that } F^T(y)\in \cC\}\). Clearly, $\cC \subseteq \cB(\cC)$;
we call \(\cC\) a \emph{limit-cycle attractor} if \( \cC \subsetneq \cB(\cC)\).
\end{definition}

Since the state space is finite, every trajectory eventually becomes periodic. The long-term behavior is therefore described by fixed points and limit cycles.

In contrast to the symmetric case, asymmetric Hopfield networks generally do not admit a global energy function, and their dynamics can exhibit nontrivial periodic behavior. The state space is thus partitioned into disjoint basins corresponding to different attractors. To capture robustness under partial recovery, we introduce a weaker notion of limit cycles that allows a bounded number of neuron mismatches. We start with the notion of Block-max Hamming distance.

\begin{definition}[Block-max Hamming distance]
\label{def:block_max_hamming}
Let $\Sigma$ be an alphabet, let $r,d \in \mathbb{N}$, and let $\mathcal S$ be a partition of $\{1,\dots,rd\}$ into $r$ parts of size $d$. After fixing an ordering of the parts in $\mathcal S$, any two strings $u,v \in \Sigma^{rd}$ decompose as
\(u = \lrp{u_1, u_2, \dots, u_r}\) and \(v = \lrp{v_1, v_2, \dots, v_r}\),
where $u_j, v_j \in \Sigma^d$ are the $j$th blocks, for $j = 1,\dots,r$. 
Let $d_H$ denote the Hamming distance on $\Sigma^d$. The block-max Hamming distance between $u$ and $v$ with respect to $\mathcal S$ is defined as
\(d_{\mathrm{BM}}^{\mathcal S}(u,v) = \max_{1 \le j \le r} d_H\big(u_j, v_j\big)\).
\end{definition}

In our block-cyclic architecture constructions, $\mathcal S$ is the canonical partition into blocks of size $d$. We therefore write $d_{\mathrm{BM}}^{(d)}$ for the corresponding block-max Hamming distance.

\begin{remark}
The block-max Hamming distance is an $\ell_\infty$-type block metric: it controls the worst number of mismatches inside any single block, rather than the total number of mismatches across the network. This viewpoint is reminiscent of locality notions in coding theory, such as locally recoverable codes, where the value of a coordinate can be recovered by querying a small prescribed set of other coordinates \citep{gopalanLocalityCodewordSymbols2012}. 
\end{remark}

\begin{definition}[$(k,d)$-weak limit cycle]
\label{def:weak}
Let $F:\{-1,+1\}^n\to\{-1,+1\}^n$ be a synchronous update map, and let $\mathcal C=(c_0,c_1,\dots,c_{P-1})$ be a cyclic reference sequence of distinct states.
The sequence need not itself be an exact limit cycle of $F$.
Assume that the neurons are partitioned into $r$ disjoint blocks $S_1,\dots,S_r$, each of size $d$, so that $n=rd$.
After fixing an ordering of the neurons inside each block, this partition induces a decomposition of any state $x\in\{-1,+1\}^n$ as
\(x = (x_1,\dots,x_r)\), where \(x_j\in\{-1,+1\}^d\).
Let $d_{\mathrm{BM}}^{(d)}$ be the corresponding block-max Hamming distance.

We say that $\mathcal C$ is a \emph{$(k,d)$-weak limit cycle} for $F$ if for every timestep of the reference sequence $i\in\{0,\dots,P-1\}$ there exists a neighborhood
\(\mathcal N_i \subseteq \{-1,+1\}^n\) with \(c_i\in \mathcal N_i\),
such that for every initial state $\tilde x(0)\in \mathcal N_i$, there exist integers $T\ge 0$ and $s\in\{0,\dots,P-1\}$ satisfying
\begin{equation*}
d_{\mathrm{BM}}^{(d)}\!\bigl(F^{T+t}(\tilde x(0)),\,c_{s+t \bmod P}\bigr)\le k
\quad \text{for all } t\ge 0.
\end{equation*}
When the reference sequence is an exact limit cycle of $F$, this reduces to the usual cycle notion with blockwise error tolerance $k$.
\end{definition}

A central object in this work is that of noise-robust limit cycles.

\begin{definition}[Noise robustness]
Let $\mathcal{C} = \{x^{(0)}, \dots, x^{(P-1)}\}$ be a limit cycle of an asymmetric Hopfield network with update map $F$. We say that $\mathcal{C}$ is $(p,\rho)$-noise robust if, for $\tilde{x}(0)$ obtained from any $x^{(i)} \in \mathcal{C}$ by independently flipping each neuron (from $1$ to $-1$ or from $-1$ to $1$) with probability~$p$,
\(\Pr\!\left[ \exists \, T \ge 0 \text{ such that } F^{T}(\tilde{x}(0)) \in \mathcal{C} \right] \ge 1 - \rho\),
where the probability is over the noise (and any randomness in the network, if applicable).
\end{definition}

\section{Analytical Tools}
    \label{sec:req_math}
    In this section, we present several preliminary results that will be used in the proof of our main theorem. In Subsection~\ref{subsec:necklace}, we recall results concerning the counting of aperiodic binary sequences, while in Subsection~\ref{subsec:lcm}, we review results related to the growth of the least common multiple ($\mathrm{lcm}$) of random integers; further details can be found in Appendices~\ref{sec:necklace} and ~\ref{sec:lcm}, respectively.

\subsection{Aperiodic Binary Sequence Counting}
\label{subsec:necklace}
We briefly recall standard asymptotic estimates related to the counting of aperiodic binary sequences, that will be used later in the argument. Binary sequences arise naturally in several combinatorial problems and the distinction between periodic and aperiodic necklaces plays a key role in enumeration.
\begin{definition}[Aperiodic binary sequence]
Let $\ell \ge 1$ and let $x \in \{-1,+1\}^\ell$. For \mbox{$k \in \{0,1,\dots,\ell-1\}$}, define the cyclic shift $\sigma^k(x)$ by
\(\lrp{\sigma^k(x)}_i = x_{(i+k)\bmod \ell}\) for \(i=0,1,\dots,\ell-1\).
The sequence $x$ is said to be \emph{aperiodic} if
\(\sigma^k(x) \neq x\) for all \(1 \le k < \ell\),
equivalently, if all its cyclic shifts $\{\sigma^k(x) : 0 \le k < \ell\}$ are pairwise distinct.
\end{definition}
We now state an asymptotic estimate for the number of aperiodic binary sequences.
\begin{lemma}
    \label{lem:necklace}
    The number $\mathcal{M}_2(\ell)$ of aperiodic binary sequences of length $\ell$ is given by \(\mathcal{M}_2(\ell) = {2^\ell}\left(1+o(1)\right)\) as $\ell\to\infty$.
\end{lemma}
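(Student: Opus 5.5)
The plan is to count the periodic sequences and show they are negligible. A binary sequence $x\in\{-1,+1\}^\ell$ that is \emph{not} aperiodic satisfies $\sigma^k(x)=x$ for some $1\le k<\ell$. The set of $k$ with $\sigma^k(x)=x$ is a subgroup of $\mathbb{Z}_\ell$, so it is generated by a divisor. Hence the minimal such $k$, call it $m$, divides $\ell$, and $x$ is determined by its first $m$ entries. Therefore every non-aperiodic sequence has minimal period $m$ for some proper divisor $m$ of $\ell$ with $m\le \ell/2$.

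The resulting bound is
\begin{equation}
2^\ell-\mathcal{M}_2(\ell)\;\le\;\sum_{m\mid \ell,\; m<\ell} 2^m\;\le\;\sum_{m=1}^{\lfloor \ell/2\rfloor}2^m\;<\;2^{\lfloor \ell/2\rfloor+1}.
\end{equation}
Since $\mathcal{M}_2(\ell)\le 2^\ell$ trivially, it follows that
\begin{equation}
1-2^{-\lceil \ell/2\rceil+1}\le \mathcal{M}_2(\ell)/2^\ell\le 1.
\end{equation}
This gives $\mathcal{M}_2(\ell)=2^\ell(1+o(1))$, in fact with an exponentially small error term.

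Alternatively, one could invoke the exact Möbius-inversion formula $\mathcal{M}_2(\ell)=\sum_{m\mid\ell}\mu(\ell/m)2^m$. This formula follows from $2^\ell=\sum_{m\mid \ell}\mathcal{M}_2(m)$, which counts sequences by their minimal period. One would then bound the non-leading terms exactly as above, but the direct counting argument already suffices.

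The only step requiring care is justifying that the minimal period divides $\ell$, that is, the subgroup argument. This is standard: if $\sigma^a(x)=\sigma^b(x)=x$, then $\sigma^{\gcd(a,b,\ell)}(x)=x$ by Bézout's identity. Everything after that is a geometric-series estimate, so I expect no real obstacle.
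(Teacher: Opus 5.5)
Your proof is correct, but it takes a different route from the paper. The paper quotes the exact M\"obius-inversion formula $\mathcal{M}_2(\ell)=\sum_{j\mid\ell}\mu(j)\,2^{\ell/j}$, which it states as a theorem without proof. It then isolates the $j=1$ term and bounds the remaining terms in absolute value by $\tau(\ell)\,2^{\ell/2}$, where $\tau(\ell)$ is the number of divisors. It finishes with $\tau(\ell)=o(2^{\ell/2})$.

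You bypass the exact count. A union bound over the sets $\{x:\sigma^m(x)=x\}$, for proper divisors $m$ of $\ell$, combined with the trivial bound $\mathcal{M}_2(\ell)\le 2^\ell$, gives the one-sided estimate you need.

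What your version buys:
\begin{itemize}
\item It is self-contained. The only nontrivial input is that the minimal period divides $\ell$, which you justify via the subgroup/B\'ezout argument. The paper instead relies on a cited formula it does not prove.
\item It gives a clean explicit error $2^{1-\lceil \ell/2\rceil}$ with no divisor-function factor, because you sum a geometric series over all $m\le \ell/2$ rather than counting divisors. This makes it marginally sharper than the paper's $\tau(\ell)\,2^{-\ell/2}$.
\end{itemize}

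The paper's route, in exchange, gives an exact formula and a two-sided estimate in one stroke, which would matter if lower-order terms were needed. For the asymptotic statement here, both arguments yield an exponentially small relative error.
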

For a proof of Lemma \ref{lem:necklace}, see Appendix \ref{sec:necklace}.
\begin{remark}
A randomly chosen binary sequence of length $\ell$ is aperiodic with probability tending to 1 as $\ell \to \infty$.
\end{remark}

\subsection{LCM of Random Sets of Positive Integers}
\label{subsec:lcm}
\noindent The function $\psi(m) = \log \mathrm{lcm}\{1 \leq a \leq m\}$ was introduced by Chebyshev in his study of the distribution of prime numbers. A well-known consequence of the Prime Number Theorem is that $\psi(m) \sim m$ as $m \to \infty$, where by $a(m) \sim b(m)$ we mean that $\lim_{m \to \infty} a(m) / b(m) = 1$.

\cite{Cilleruelo2014} extended this analysis to random subsets of $\{1,2,\dots,m\}$ as follows. For a subset $A \subseteq \{1,2,\dots,m\}$, define $\psi(A) = \log \mathrm{lcm}\{a : a \in A\}$. Then, the following result holds for random subsets of $\{1,2,\dots,m\}$.

\begin{theorem}
\label{thm:cill_main}
Let $A$ be the set constructed by choosing each element from $\{1,2,\dots,m\}$ independently with probability $\delta = \delta(m)$. If $\delta=\delta(m)<1$ and $\delta \cdot m \to \infty$, then
\begin{equation}
    \label{eq:psi:asymptotics}
    \psi(A) \sim m \frac{\delta \log(\delta^{-1})}{1-\delta}
\end{equation}
asymptotically almost surely as $m \to \infty$.
\end{theorem}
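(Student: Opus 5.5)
The plan is to prove Theorem~\ref{thm:cill_main} with the approach of \cite{Cilleruelo2014}. We write $\psi(A)$ as a sum over prime powers and reduce the statement to a first-moment computation plus a concentration argument. Since $\log \mathrm{lcm}(A) = \sum_{p^k \le m} \log p \cdot \mathbf{1}\{\exists a\in A: p^k \mid a\}$, we have
\[
\mathbb{E}[\psi(A)] = \sum_{p^k\le m} \log p\,\bigl(1-(1-\delta)^{\lfloor m/p^k\rfloor}\bigr).
\]
The proof has three steps: compute this expectation asymptotically, bound the variance, and apply Chebyshev's inequality.

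For the expectation, the contribution of higher prime powers $k\ge 2$ is $O(\sqrt m\log m)$. After showing that $\delta m\log(1/\delta)$ dominates this (see below), that contribution is negligible. For $k=1$, we group primes according to $j=\lfloor m/p\rfloor$. Primes $p\in(m/(j+1),m/j]$ contribute total weight $\sum \log p \approx m\bigl(\tfrac1j-\tfrac1{j+1}\bigr)=\tfrac{m}{j(j+1)}$ by the Prime Number Theorem, each with factor $1-(1-\delta)^j$. This gives
\[
\mathbb{E}[\psi(A)]\approx m\sum_{j\ge1}\frac{1-(1-\delta)^j}{j(j+1)}.
\]
Writing $\tfrac1{j(j+1)}=\tfrac1j-\tfrac1{j+1}$ and summing the generating series $\sum_j x^j/j=-\log(1-x)$ with $x=1-\delta$ yields exactly $\tfrac{\delta\log(1/\delta)}{1-\delta}$. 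This identity is why the closed form appears.

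The main obstacle is justifying the PNT substitution uniformly when $\delta\to0$. In that regime the relevant $j$ range up to about $1/\delta$, so primes near $m/j$ with $j$ as large as $\delta^{-1}\cdot\mathrm{polylog}$ matter. We need $\theta(x)\sim x$ in short ranges $(m/(j+1),m/j]$, which fails when the interval is too short. To get around this, we split at $J$ with $J\to\infty$ slowly (relative to $\delta m$). For $j\le J$, PNT with error $o(m/J^2)$ per block is enough. For $j>J$ we do not use intervals at all. Instead we take the primes $p\le m/J$ collectively and apply partial summation with $\theta(x)=x(1+o(1))$ on the long interval. We also use that $1-(1-\delta)^{m/p}$ is monotone in $p$, which lets us replace the sum by the integral $\int_2^{m/J}(1-(1-\delta)^{m/x})\,dx$. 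After the substitution $u=m/x$, this integral reproduces the tail of the series. The hypothesis $\delta m\to\infty$ ensures that the main term $\asymp \delta m\log(1/\delta)$ dominates both the errors and the small primes $x\le$ const.

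For concentration, the indicators for distinct primes $p,q$ are positively correlated only through multiples of $pq$, and they are in fact increasing functions of independent variables. A cleaner route is therefore a bounded-differences or Talagrand-type inequality. Adding or removing a single element $a$ changes $\psi(A)$ by at most $\log a\le\log m$, but this gives too weak a bound when $\delta$ is small. Instead we compute the variance directly. Its terms are $\log p\log q\,\mathrm{Cov}$, and for $p\ne q$ the covariance is $(1-\delta)^{m/p+m/q}\bigl((1-\delta)^{-\lfloor m/pq\rfloor}-1\bigr)$, which is nonzero only when $pq\le m$ and is small. The diagonal terms sum to at most $\log m\cdot\mathbb{E}[\psi]$. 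Altogether $\mathrm{Var}(\psi(A))=o(\mathbb{E}[\psi]^2)$, and Chebyshev's inequality gives $\psi(A)\sim\mathbb{E}[\psi(A)]$ asymptotically almost surely, completing the proof.
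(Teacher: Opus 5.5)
The paper does not prove this statement; it imports it from \cite{Cilleruelo2014}, so your first/second-moment outline has to stand on its own. The strategy is the right one, but the expectation step fails as written. You bound the prime powers $p^k$, $k\ge 2$, by the trivial estimate $\sum_{p^k\le m,\,k\ge 2}\log p=O(\sqrt m\log m)$ and assert that $\delta m\log(1/\delta)$ dominates this. Under the hypothesis $\delta m\to\infty$ that is false for all $\delta$ below roughly $m^{-1/2}$. For example, when $\delta=(\log m)/m$ the main term is only of order $(\log m)^2$.

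You need the $\delta$-dependent bound $1-(1-\delta)^{\lfloor m/p^k\rfloor}\le \delta m/p^k$. It gives $O(\delta m)=o(\delta m\log(1/\delta))$ when $\delta\to 0$; keep the trivial bound for $\delta\ge m^{-1/4}$. More simply, include prime powers from the start. With $\Lambda$ the von Mangoldt function and summation by parts in $j=\lfloor m/n\rfloor$,
\[
\mathbb{E}[\psi(A)]=\sum_{n\le m}\Lambda(n)\bigl(1-(1-\delta)^{\lfloor m/n\rfloor}\bigr)=\delta\sum_{j\ge 1}(1-\delta)^{j-1}\psi(m/j),
\]
where $\psi(x)=\log\mathrm{lcm}\{1\le a\le x\}$ is Chebyshev's function. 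Use $\psi(x)=(1+o(1))x$ for $j\le m/X$ and $\psi(x)=O(x)$ for $j>m/X$; the latter is a tail of size $O(X)$. Together with $\delta\sum_{j\ge1}(1-\delta)^{j-1}/j=\delta\log(1/\delta)/(1-\delta)$ and the fact that the main term tends to infinity, this gives the asymptotics directly. You then need no short-interval PNT, no cut at $J$, and no separate prime-power bookkeeping.

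The concentration step is only asserted. Saying the covariance for $p\ne q$ ``is small'' is not enough. When $\delta m$ grows slowly, the off-diagonal terms add up to the same order as the diagonal ones, about $\delta m(\log m)^2$. So Chebyshev works only because of a comparison you have not made.

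The clean bound is
\[
\mathrm{Cov}(I_n,I_{n'})=(1-\delta)^{a+b-c}\bigl(1-(1-\delta)^{c}\bigr)\le\min\{1,\delta c\},
\]
with $a=\lfloor m/n\rfloor$, $b=\lfloor m/n'\rfloor$ and $c=\lfloor m/\mathrm{lcm}(n,n')\rfloor$. This also covers the same-prime pairs $p^\alpha,p^\beta$ that you omitted. It yields
\[
\mathrm{Var}(\psi(A))\le\delta m\sum_{\mathrm{lcm}(n,n')\le m}\frac{\Lambda(n)\Lambda(n')}{\mathrm{lcm}(n,n')}=O\bigl(\delta m(\log m)^2\bigr).
\]
If $\delta\le m^{-1/2}$, then $\log(1/\delta)\ge\frac12\log m$ and $\mathrm{Var}/\mathbb{E}^2=O(1/(\delta m))$. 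If $\delta\ge m^{-1/2}$, then $\mathbb{E}[\psi(A)]\gg\delta m\ge\sqrt m$ and the ratio is $O((\log m)^2/\sqrt m)$. This is exactly where $\delta m\to\infty$ is used. With these two repairs your outline becomes a complete proof.
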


Using the analysis of \cite{Cilleruelo2014}, we show that the same asymptotic result holds when each element is chosen independently from $\{b,\dots,m\}$ with probability $\delta$, where $b<m$ satisfies $b/m = o(1)$.

\begin{theorem}
\label{thm:cill_ours}
Let $A$ be the set constructed by choosing each element from $\{b,\dots,m\}$ independently with a constant probability $\delta$, for some integer $b=b(m)$ satisfying $b/m = o(1)$. Then, \eqref{eq:psi:asymptotics} holds asymptotically almost surely as $m \to \infty$.
\end{theorem}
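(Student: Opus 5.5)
The plan is to compare the random set $A\subseteq\{b,\dots,m\}$ with the random set $A'\subseteq\{1,\dots,m\}$ of Theorem~\ref{thm:cill_main} through a coupling, and to show that the elements below $b$ contribute only $o(m)$ to $\psi$.

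\textbf{Coupling.} Draw $A'$ with each element of $\{1,\dots,m\}$ included independently with probability $\delta$, and set $A=A'\cap\{b,\dots,m\}$. Then $A$ has exactly the distribution in the statement. Since $\delta$ is a constant in $(0,1)$, we have $\delta m\to\infty$, so Theorem~\ref{thm:cill_main} gives
\[
\psi(A')\sim m\,\frac{\delta\log(\delta^{-1})}{1-\delta}
\]
asymptotically almost surely. The target quantity $c_\delta m$, with $c_\delta=\delta\log(\delta^{-1})/(1-\delta)>0$, is linear in $m$. So it suffices to show that $|\psi(A')-\psi(A)|=o(m)$ deterministically.

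\textbf{Sandwich bound.} Because $A\subseteq A'$, the lcm of $A$ divides the lcm of $A'$, so $\psi(A)\le\psi(A')$. In the other direction, $\mathrm{lcm}(A')$ divides $\mathrm{lcm}(A)\cdot\mathrm{lcm}(A'\cap\{1,\dots,b-1\})$. Hence
\[
\psi(A')\le\psi(A)+\psi(\{1,\dots,b-1\})=\psi(A)+\log\mathrm{lcm}\{1,\dots,b-1\}.
\]
By Chebyshev's bound, or by $\psi(x)\sim x$ as recalled above, $\log\mathrm{lcm}\{1,\dots,b-1\}=O(b)$. This is $o(m)$ because $b/m=o(1)$. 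If $b$ stays bounded the term is $O(1)$, which is also fine.

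\textbf{Conclusion.} Combining the two bounds, $\psi(A)=\psi(A')-o(m)=(1+o(1))\,c_\delta m-o(m)=(1+o(1))\,c_\delta m$ asymptotically almost surely. This is exactly \eqref{eq:psi:asymptotics}.

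\textbf{Main obstacle.} I expect no real obstacle; the only delicate point is the lcm divisibility step. For each prime $p$, the exponent of $p$ in $\mathrm{lcm}(A')$ is the maximum over $A'$. That maximum is attained either in $A$ or among the integers below $b$, and in either case it is at most the exponent of $p$ in the product $\mathrm{lcm}(A)\cdot\mathrm{lcm}(\{1,\dots,b-1\})$. This argument avoids redoing Cilleruelo's analysis. It relies on $c_\delta>0$, which holds because $\delta$ is a constant strictly between $0$ and $1$.
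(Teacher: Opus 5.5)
Your proposal is correct and matches the paper's proof essentially step for step. The paper also realizes $A$ as $R_m\cap\{b,\dots,m\}$, sandwiches $\psi(A)\le\psi(R_m)\le\psi(A)+\log\mathrm{lcm}(1,\dots,b-1)$ with the last term $\sim b=o(m)$, and then invokes Theorem~\ref{thm:cill_main}; your remark that $c_\delta>0$, so that an additive $o(m)$ error preserves the asymptotic equivalence, is a point the paper leaves implicit.
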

In particular, in both cases, $\log \mathrm{lcm}(A)$ grows linearly in $m$ asymptotically almost surely. For further details, see Appendix \ref{sec:lcm}. Note that the above results imply that in both cases, $\mathrm{lcm}(A) = \exp(\Omega(m))$.

\section{Super-Polynomial Capacity of Asymmetric Hopfield Networks}
    \label{sec:main}
    In this section, we introduce the construction underlying our theoretical results. We then present our main results along with proof sketches; full proofs are deferred to Appendix \ref{sec:main_results}. In Section \ref{sec:sparse}, we discuss several relaxations of this construction, highlighting its biological plausibility. We begin with some definitions.

\begin{definition}[Blocks and block cycles]
A \emph{block} of size $d$ is a collection of $d$ neurons with no internal connections between them.
Given two blocks $S_1$ and $S_2$, we write $S_1 \to S_2$ if every neuron in $S_1$ is connected to every neuron in $S_2$, with all edges directed from $S_1$ to $S_2$, and each such edge assigned weight $1$ (see Figure \ref{fig:supnode_conn} in Appendix~\ref{app:construction-figures}).
A \emph{block cycle} of length $\ell$ is a collection of blocks $S_0,S_1,\dots,S_{\ell-1}$ (depicted Figure \ref{fig:sup_cycle} in Appendix~\ref{app:construction-figures}) such that
\[
S_0 \to S_1 \to \dots \to S_{\ell-1} \to S_0 .
\]
\end{definition}

\begin{definition}[Dense block-cyclic architecture]
Let $m$ and $d$ be positive integers.
A \emph{block-cyclic architecture with parameters $d$ and $m$}, denoted $\BCA{d,m}$, is an asymmetric Hopfield network constructed as follows.

\begin{enumerate}
\item 
    Let $A$ be a multiset of $z=\lceil 4m\rceil$ integers chosen independently uniformly at random from the set $\{1,2,\dots,m\}$.
\item 
    For each $\ell \in A$, create a \emph{block cycle} of length $\ell$ of blocks of size $d$.
\item 
    Let $N_{\mathrm{blocks}} = \sum\limits_{\ell \in A} \ell$ be the total number of blocks, and denote by $n = dN_{\mathrm{blocks}}$ the total number of neurons in the \emph{block-cyclic architecture}.
\end{enumerate}
\end{definition}

The total number of blocks is $N_{\mathrm{blocks}}=\Theta(m^2)$ with high probability, and hence the total number of neurons in $\BCA{d,m}$ is $n = dN_{\mathrm{blocks}} = \Theta \lrp{m^2 d}$ with high probability, as shown in Lemma~\ref{lem:sum_distinct} in Appendix~\ref{sec:sum}.
Also note that, after removing repeated samples from $A$, the resulting set contains an independently sampled subset of $\{1,\dots,m\}$ with constant inclusion probability with high probability; see Lemma~\ref{lem:linear_sampling_domination} in Appendix~\ref{sec:sum}.
We now state our main result.
\begin{theorem}
\label{thm:main}
Let $m\in \mathbb{Z}^{+}$ be sufficiently large. Fix a sufficiently large constant $K$, set $d=\lceil K(\log m)^2\rceil$. Let $n$ denote the number of neurons in $\BCA{d,m}$. Then there exist constants $c_1,c_2>0$ such that, with probability $1-o(1)$, $\BCA{d,m}$ contains a family $\mathcal A$ of limit-cycle attractors satisfying the following:
\begin{enumerate}
    \item 
    \label{itm:num_lim_cycles}
        (\textbf{Number of limit cycles})
        Writing $N_{\mathrm{cycles}}:=|\mathcal A|$, we have
        \begin{equation}
            \label{eq:num_lim_cycles}
            N_{\mathrm{cycles}} \ge \exp\!\Big( c_1 \frac{n}{(\log n)^2}\Big).
        \end{equation}

    \item 
    \label{itm:cycle_length}
        (\textbf{Periods of the constructed attractors})  
        Every attractor $\mathcal C\in\mathcal A$ has period at least 
        \begin{equation}
            \label{eq:cycle_length}
            \exp\!\Big( c_2 \frac{\sqrt n}{\log n}\Big).
        \end{equation}

    \item 
    \label{itm:robustness}
        (\textbf{Noise robustness})  
        For $n$ large enough, fix any $\mathcal C\in\mathcal A$ and any state $x\in\mathcal C$. If each neuron of $x$ is independently flipped with probability $p_n$ at time $t=0$, where 
        $p_n \le \frac12 - \sqrt{\sfrac{\log n}{d}}$,
        then with probability $1-o(1)$ the perturbed trajectory returns to $\mathcal C$.
\end{enumerate}
\end{theorem}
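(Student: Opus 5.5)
We first pin down the dynamics of $\BCA{d,m}$. Each neuron in block $S_{i+1}$ of a block cycle has incoming edges of weight $1$ from exactly the $d$ neurons of $S_i$, and from nowhere else. Hence $x_v(t+1)=\operatorname{sign}\big(\sum_{u\in S_i}x_u(t)\big)$, i.e.\ the majority of the predecessor block. Ties keep the neuron's own value.

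Call a state \emph{block-consistent} if every block is monochromatic. On such states, each block cycle of length $\ell$ acts as a rotation of a binary word $w\in\{-1,+1\}^\ell$ by one position. The global state after $t$ steps is therefore the tuple of rotations $(\sigma^{t}(w_\ell))_{\ell\in A}$. If every word $w_\ell$ is aperiodic, the cycle of word $\ell$ has exact period $\ell$. The global orbit then has period $\mathrm{lcm}$ of the distinct lengths in $A$, and is an exact limit cycle of $F$. We let $\mathcal A$ be the family of such orbits generated by choosing one aperiodic word per block cycle in $A$. Two choices give the same orbit only if they differ by a simultaneous rotation, and there are at most $\max_\ell\ell\cdot\text{(lcm)}$ rotations. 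It is simplest to bound the number of orbits from below by (number of word tuples)/(period). Since the period is at most $\prod_{\ell}\ell \le m^{4m+1}$, this costs only $\exp(O(m\log m))$.

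\textbf{Counting.} By Lemma~\ref{lem:necklace}, the number of aperiodic tuples is $\prod_{\ell\in A}\mathcal M_2(\ell)\ge 2^{N_{\mathrm{blocks}}(1-o(1))}$, up to a factor $c^{|A|}$ that handles small $\ell$. Here $N_{\mathrm{blocks}}=\Theta(m^2)$ by Lemma~\ref{lem:sum_distinct}. Dividing by $\exp(O(m\log m))$ still leaves $\exp(\Omega(m^2))$ orbits. We have $n=\Theta(m^2d)=\Theta(m^2(\log m)^2)$, so $\log n=\Theta(\log m)$ and $m^2=\Theta(n/(\log n)^2)$, which gives \eqref{eq:num_lim_cycles}. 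For the periods, Lemma~\ref{lem:linear_sampling_domination} shows that the distinct elements of $A$ contain an independent $\delta$-random subset of $\{1,\dots,m\}$ with constant $\delta$. Theorem~\ref{thm:cill_main} (monotonicity of lcm under inclusion) then gives $\log\mathrm{lcm}=\Omega(m)$ a.a.s. Since $m=\Theta(\sqrt n/\log n)$, this is \eqref{eq:cycle_length}. Each cycle is a genuine attractor (strict basin) because any non-monochromatic perturbation of one block maps back onto the cycle in one step.

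\textbf{Robustness.} Start from $x\in\mathcal C$ and flip each neuron independently with probability $p_n$. For a block with original sign $s$, the sum $\sum_{u}x_u$ has mean $s\,d(1-2p_n)\ge 2s\sqrt{d\log n}$. By Hoeffding, the sum has the wrong sign (or is zero) with probability at most $\exp(-2(d(1-2p_n))^2/(4d))\le\exp(-2\log n)=n^{-2}$. A union bound over the $N_{\mathrm{blocks}}\le n$ blocks shows that w.p.\ $1-o(1)$ every block's majority equals its original sign. After one synchronous step, every block $S_{i+1}$ is then monochromatic with the sign $S_i$ had. This is exactly $F(x)$, because on the unperturbed cycle $F$ is the same majority rotation. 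So $F(\tilde x(0))=F(x)\in\mathcal C$.

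\textbf{Main obstacle.} The delicate part is the probabilistic bookkeeping. The counting and the lcm bound must hold simultaneously w.p.\ $1-o(1)$ over the random multiset $A$. This requires handling the multiset-with-repeats issue via Lemma~\ref{lem:linear_sampling_domination}. It also requires checking that $n$ concentrates, so that conversions between $m$ and $n$ lose only constants. Small cycle lengths, where Lemma~\ref{lem:necklace}'s asymptotic fails, must be absorbed: for those $\ell$, we take any single aperiodic word (e.g.\ $+1$ followed by $-1$'s; for $\ell=1$ the constant word), losing at most a $2^{O(m)}$ factor.
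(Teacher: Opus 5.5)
Your proposal is correct and follows essentially the same route as the paper. The common steps are: aperiodic block-monochromatic words rotating on each block cycle, counting orbits as (number of aperiodic tuples)$/P$, the period bound via Lemma~\ref{lem:linear_sampling_domination} plus Theorem~\ref{thm:cill_main}, and one-step recovery by Hoeffding plus a union bound over blocks. Your explicit handling of short block cycles, where Lemma~\ref{lem:necklace} gives no $1+o(1)$ factor, is in fact more careful than the paper's $(1+o(1))^z$ term. One wording fix is needed in the strict-basin claim: the perturbation must preserve each block's strict majority (e.g.\ a single flipped neuron with $d\ge 3$), not merely make the block non-monochromatic.
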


\begin{remark}
The choice $d=\lceil K(\log m)^2\rceil$ is made for concreteness. More generally, the same proof shows that if the block size $d=d(n)$ satisfies $d=\omega(\log n)$, then the dense construction remains noise robust for flip probabilities
\(p_n \le \frac12 - \sqrt{\sfrac{\log n}{d}} = \frac12-o(1)\),
while the bounds in \eqref{eq:num_lim_cycles} and \eqref{eq:cycle_length} become $N_{\mathrm{cycles}} \ge \exp(\Omega(n/d))$ and $P \ge \exp(\Omega(\sqrt{n/d}))$.
\end{remark}

We next provide a proof sketch of Theorem~\ref{thm:main}. A rigorous proof thereof is deferred to Appendix~\ref{sec:main_results}.

\begin{proof}[Proof Sketch]
\textit{1.} 
    Consider a block cycle in the architecture. Assign signs to the neurons such that neurons in the same block have all the same sign and such that the resulting binary sequence of blocks is aperiodic. Such a configuration forms a limit cycle, where blocks simply transfer their state to the succeeding one, making a rotating loop. The desired lower bound~\eqref{eq:num_lim_cycles} then follows by bounding the number of such configurations using results of Section~\ref{subsec:necklace}. 

\noindent
\textit{2.}
For every attractor in the constructed family, the full block-cyclic architecture evolves as the product of rotations on its block cycles. Since the binary string on each block cycle is aperiodic, the period equals the $\mathrm{lcm}$ of the block-cycle lengths, which can be bounded as in \eqref{eq:cycle_length} by appealing to results in Section~\ref{subsec:lcm}. 

\noindent
\textit{3.}
The choice of parameters guarantees that if each neuron is independently flipped with probability at most $\frac12-\sqrt{\sfrac{\log n}{d}}$, then the majority of neurons within each block is preserved with probability $1-o(1)$ by Hoeffding's bound, meaning that random perturbations are corrected at the next synchronous update.
%
%
\end{proof}

\begin{remark}
Appendix \ref{sec:main_results} also records a global, fast convergence statement for arbitrary initial configurations. 
For the dense construction of Theorem \ref{thm:main}, every trajectory reaches some limit cycle after at most one full turn around the longest block cycle; sparse and adversarial analogues are given in Theorem \ref{thm:global_cycle}.
\end{remark}

\begin{remark}
Even if the block-cycle lengths are chosen more carefully so as to be pairwise co-prime, the period does not improve drastically. For justification, see Remark \ref{rem:coprime} in Appendix \ref{sub_app:coprime}.
\end{remark}

\section{Sparsity, Randomness and Biological Plausibility}
    \label{sec:sparse}
    In this section, we present several results that highlight the biological plausibility of our theory. The adjacent blocks in the block cycles of Theorem \ref{thm:main} are complete bipartite graphs. We first show that this assumption can be relaxed: instead of all-to-all connectivity between adjacent blocks, each neuron may receive only $h$ incoming edges from the preceding block, for any odd $h \ge 3$.

\begin{definition}[$h$-sparse connection]
\label{def:spars_sup_nodes_conn}
Let $S_1$ and $S_2$ be two blocks. We say that $S_1$ is \emph{$h$-sparsely connected} to $S_2$ if every neuron in $S_2$ is connected to exactly $h$ neurons in $S_1$ chosen uniformly at random with replacement, with all edges directed from $S_1$ to $S_2$, and each edge assigned weight equal to its multiplicity, i.e., if a neuron in $S_1$ is chosen $v$ times, then the corresponding edge has weight $v$.
\end{definition}

\begin{definition}[Sparse block-cyclic architecture]
Let $m,b,h,d\in \mathbb{N}$ be fixed parameters. Suppose $3\leq h<d$ is an odd integer and $b<m$. An \emph{$h$-sparse block-cyclic architecture with parameters $h$, $d$, $b$, and $m$}, referred to as $h$-sparse $\BCA{d,b,m}$, is an asymmetric Hopfield network constructed as follows.

\begin{enumerate}
\item
    Let $A$ be a multiset of \(z=\lceil 4(m-b+1)\rceil\) integers chosen independently uniformly at random from the set $\{b,b+1,\dots,m\}$.
\item 
    For each $\ell \in A$, create a \emph{block cycle} of length $\ell$ of blocks of size $d$ where adjacent blocks in these block cycles are $h$-sparsely connected.
\item 
    Let $N_{\mathrm{blocks}} = \sum\limits_{\ell \in A} \ell$ be the total number of blocks, and denote by $n = dN_{\mathrm{blocks}}$ the total number of neurons in the \emph{sparse block-cyclic architecture}.
\end{enumerate}
\end{definition}
We now state the $h$-sparse version of the result. To prove this result, we have used the classic theory on consensus reaching in opinion dynamics, which is reviewed in Appendix \ref{sec:majority}.
\begin{theorem}
\label{thm:main_reg_sparse}
Let $\varepsilon \in(0,1)$ satisfy $\varepsilon(1+\xi) > 1$, let $h\in \mathbb{N}$ be odd with $h\geq 3$, and let $m\in \mathbb{N}$ be sufficiently large.\footnote{$\xi$ denotes the exponent in the $h$-majority recovery bound under the bias induced by the noise level $p$: starting from that bias, the process returns to the correct consensus in $O(\log d)$ rounds with probability $1-1/d^{\xi}$. See Appendix \ref{sec:majority} for details.}
Set $d=\lceil m^{2\varepsilon/(1-\varepsilon)}\rceil$ and choose $b=b(m)$ such that $b\ge c_3\log m$ for a sufficiently large constant $c_3$ and $b/m\to0$.
Let $n$ denote the number of neurons in the resulting $h$-sparse $\BCA{d,b,m}$.
Then there exist constants $c_1,c_2>0$ such that, with probability $1-o(1)$, the resulting $h$-sparse block-cyclic architecture contains a family $\mathcal A$ of limit-cycle attractors satisfying the following:

\begin{enumerate}
\item
\label{itm:sparse:num_lim_cycles}
(\textbf{Superpolynomial number of limit cycles})  
    Writing $N_{\mathrm{cycles}}:=|\mathcal A|$, we have \(N_{\mathrm{cycles}} \ge \exp\!\big( c_1 {n}^{1-\varepsilon}\big)\).

\item 
\label{itm:sparse:cycle_length}
(\textbf{Periods of the constructed attractors})  
    Every attractor $\mathcal C\in\mathcal A$ has period at least \(\exp\!\big( c_2 (n^{1-\varepsilon})^{\frac{1}{2}} \big)\).

\item 
\label{itm:sparse:robustness}
(\textbf{Noise robustness})  
    For $n$ large enough, fix any $\mathcal C\in\mathcal A$ and any state $x\in\mathcal C$. If each neuron of $x$ is independently flipped at time $t=0$ with probability $p$, where $p \le \frac12 - C_p\sqrt{\sfrac{\log n}{d}}$ for a sufficiently large constant $C_p$, then with probability $1-o(1)$ (as $n\to\infty$) the perturbed trajectory returns to $\mathcal C$.
\end{enumerate}
\end{theorem}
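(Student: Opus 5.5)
We follow the three-part structure of the proof sketch of Theorem~\ref{thm:main}, now using the sparse, $h$-majority dynamics between adjacent blocks.

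\textbf{Size bookkeeping.} The multiset $A$ is drawn from $\{b,\dots,m\}$ with $z=\lceil 4(m-b+1)\rceil$ samples. By the analogue of Lemma~\ref{lem:sum_distinct}, w.h.p. $N_{\mathrm{blocks}}=\Theta(m^2)$ (using $b/m\to 0$), and hence $n=\Theta(m^2 d)$. With $d=\lceil m^{2\varepsilon/(1-\varepsilon)}\rceil$ this gives $n=\Theta(m^{2/(1-\varepsilon)})$. Therefore $m^2=\Theta(n^{1-\varepsilon})$ and $d=\Theta(n^{\varepsilon})$.

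\textbf{Parts 1 and 2 (counting and periods).} We use the block-constant configurations of the dense case. On each block cycle of length $\ell$, assign every neuron of block $S_j$ the sign $s_j$, where $(s_0,\dots,s_{\ell-1})$ is an aperiodic binary string. The weights are positive multiplicities summing to the odd number $h$, so every neuron in $S_{j+1}$ sees an input of sign $s_j$ with no ties. Hence the block pattern rotates exactly, as in the dense case.

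For counting, Lemma~\ref{lem:necklace} gives at least $2^{\ell}(1+o(1))/\ell$ aperiodic strings per cycle up to rotation, since $\ell\ge b\to\infty$. Distinct choices up to a global shift give distinct cycles, so
\[
N_{\mathrm{cycles}}\ge \exp\!\big(\Omega(N_{\mathrm{blocks}}-z\log m)\big)=\exp(\Omega(m^2))=\exp(\Omega(n^{1-\varepsilon})).
\]
For the period, each cycle's pattern has exact period $\ell$, so the global period is $\mathrm{lcm}(A)$. By Lemma~\ref{lem:linear_sampling_domination}, the distinct elements of $A$ dominate an independent subset of $\{b,\dots,m\}$ with constant density. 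Theorem~\ref{thm:cill_ours} then yields $\mathrm{lcm}(A)\ge\exp(\Omega(m))=\exp(\Omega(n^{(1-\varepsilon)/2}))$. Finally, the basins are strictly larger than the cycles because a single flipped neuron is corrected in one step. This follows from Part 3, or directly for $h\ge3$ when the chosen neighbors are distinct; otherwise we perturb some block suitably.

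\textbf{Part 3 (robustness), the main obstacle.} After the noise, each block's neurons agree with the intended block sign with probability at least $\tfrac12+C_p\sqrt{\log n/d}$. By Hoeffding and a union bound over the $O(m^2)$ blocks, w.h.p. every block then has a bias of order $\sqrt{\log n/d}$ toward its correct value.

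The difficulty is that, unlike the dense case, one step of the sparse dynamics does not restore consensus. The update from $S_j$ to $S_{j+1}$ is one round of $h$-majority dynamics in which each neuron samples $h$ fresh random neurons. It is fresh because the sampling graph of each edge is independent, and a block's state is carried around the cycle through different connection graphs. Tracking a given block pattern as it moves $S_j\to S_{j+1}\to\cdots$, we therefore obtain a sequence of $h$-majority rounds. The fresh-randomness claim holds for the first $\ell$ rounds; after wrapping around the cycle, the graphs repeat.

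We invoke the consensus result of Appendix~\ref{sec:majority}: from a bias of order $\sqrt{\log d/d}$, the process reaches the correct consensus in $O(\log d)$ rounds with failure probability at most $d^{-\xi}$. The round count fits within one pass of the cycle because $\ell\ge b\ge c_3\log m$, and $O(\log d)=O(\log m)$. The quenched-versus-fresh randomness issue can thus be handled by noting that consensus is reached before any graph is reused. Once consensus holds, it is absorbing by the tie-free argument above.

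It remains to union bound over all $N_{\mathrm{blocks}}=O(m^2)$ tracked patterns:
\[
m^2 d^{-\xi}=m^{2-2\varepsilon\xi/(1-\varepsilon)}\to 0,
\]
which holds exactly when $\varepsilon(1+\xi)>1$. So w.h.p. every block reaches its correct sign, and the trajectory enters $\mathcal C$ itself rather than a shifted copy, because each block recovers its own value.
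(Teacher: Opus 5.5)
Your proposal is correct and follows essentially the same route as the paper. It uses the same size bookkeeping via the sum-of-lengths corollary, the same period argument via Lemma~\ref{lem:linear_sampling_domination} and Theorem~\ref{thm:cill_ours}, and the same robustness argument: a Hoeffding bias bound, $h$-majority consensus within $c\log d\le c_3\log m\le b$ rounds, and a union bound over blocks using $\varepsilon(1+\xi)>1$. The only minor difference is in the count: you quotient each block cycle by its own rotations, losing $O(m\log m)$ in the exponent, whereas the paper divides the number of aperiodic states by the global period $P$, losing $O(m)$; both losses are $o(m^2)$, so either works.
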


\begin{proof}[Proof Sketch]
\noindent
\textit{1.}
    Item~\ref{itm:sparse:num_lim_cycles} follows from the counting argument for Item~\ref{itm:num_lim_cycles} of Theorem~\ref{thm:main}, with $d = \ceil{K \log^2 m}$ replaced by $d = \ceil{m^\frac{2\varepsilon}{1-\varepsilon}}$.
    The same quotient by the global orbit period appears, but its logarithm is \(O(m)=o(n/d)\), so it does not affect the stated lower bound.
    Let $\mathcal A$ denote the corresponding family of attractors obtained from the aperiodic assignments.

\noindent
\textit{2.}
    Claim~\ref{itm:sparse:cycle_length} is proved by applying Lemma~\ref{lem:linear_sampling_domination} on the interval $\{b,\dots,m\}$ and then using Theorem~\ref{thm:cill_ours}.

\noindent
\textit{3.}
    Because each neuron receives exactly $h$ inputs from the preceding block and $h$ is odd, the Hopfield update on a block coincides with synchronous $h$-majority dynamics. If each neuron is flipped with probability $p \le \frac12 - C_p\sqrt{\sfrac{\log n}{d}}$, then each neuron remains unflipped with probability at least $\frac12 + C_p\sqrt{\sfrac{\log n}{d}}$; a Hoeffding bound therefore shows that every block still has bias at least $\lambda_1\sqrt{d\log d}$ toward its pre-noise state with probability $1-o(1)$. The large-bias recovery estimate for $h$-majority reviewed in Appendix \ref{sec:majority} then implies that each block returns to its original state within $O(\log d)$ rounds with failure probability at most $d^{-\xi}$, where $\xi$ is the exponent furnished by that recovery bound for this noise level. A union bound over all blocks yields overall recovery with probability $1-o(1)$. See Appendix \ref{sec:main_results} for the formal proof.
\end{proof}

We now state our result with added adversarial connections. In this result, we use the weak notion of limit cycles (Definition \ref{def:weak}). Appendix \ref{sub_app:spectral_embed} includes spectral embeddings of representative sparse block-cyclic architectures with adversarial connections. Let $\mathcal A$ be the class of limit cycles on the sparse block-cyclic architecture obtained from the aperiodic monochromatic assignments used in Theorem \ref{thm:main_reg_sparse}. We show that if a sufficiently small multiple of \(\sqrt d/\log d\) neurons in each block receive adversarial incoming edges from elsewhere in the block-cyclic architecture, and each neuron is independently flipped with probability $p\in(0,1/2)$, then the trajectory tracks the original cycle within block-max distance \(O(\sqrt d/\log d)\) with high probability.

\begin{theorem}[Robustness under adversarial perturbations]
\label{thm:adv}
Let $\varepsilon\in(0,1)$ satisfy $\varepsilon(1+\xi)>1$, where $\xi>0$ is the polynomial-decay exponent in the per-block recovery bound from Appendix \ref{sec:majority}, and let $m\in\mathbb N$ be sufficiently large.
Set $d=\big\lceil m^{2\varepsilon/(1-\varepsilon)}\big\rceil$ and choose $b=b(m)$ such that $b\ge c_4(\log m)^2$ for a sufficiently large constant $c_4$ and $b/m\to0$.
Let $\mathcal H$ be a $3$-sparse $\BCA{d,b,m}$ with $n$ neurons, and let $\mathcal A$ denote the set of its limit-cycle states obtained from aperiodic monochromatic assignments.
Fix any initial state $c\in\mathcal A$.
Choose a sufficiently small constant $\gamma>0$ and a sufficiently large constant $C_{\mathrm{adv}}>0$, and set
\[
q=\big\lfloor \gamma \frac{\sqrt d}{\log d}\big\rfloor,
\qquad
k=\big\lceil C_{\mathrm{adv}}\frac{\sqrt d}{\log d}\big\rceil.
\]
Now add arbitrary incoming adversarial edges to at most $q$ neurons in each block of $\mathcal H$, with all other neurons unchanged. If, at time $t=0$, each neuron is independently flipped with probability $p\in(0,1/2)$ from the initial state $c$, then with probability $1-o(1)$ as $n\to\infty$, the resulting trajectory returns to the $(k,d)$-weak limit cycle associated with the original monochromatic orbit through $c$, with respect to the canonical block partition.
\end{theorem}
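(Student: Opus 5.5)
The plan is to analyze a single block cycle of the $3$-sparse architecture and treat the adversarially targeted neurons as a bounded per-block corruption. Unlike the sparse theorem, we do not aim to recover exact consensus; we only aim to keep the disagreement in each block below $k$. Fix the reference orbit through $c$: block $S_j$ at time $t$ should carry the monochromatic sign $s_j(t)$, which is the sign of its predecessor at time $t-1$. For a block $S$ and time $t$, let $D_S(t)$ be the number of neurons in $S$ disagreeing with the reference sign. An adversarially wired neuron can take any value, so it contributes at most $q$ to $D_S(t)$, and we simply count it as wrong. Every other neuron in $S_{j}$ computes a $3$-majority of three uniformly random (with replacement) neurons of $S_{j-1}$, since its incoming edges are unchanged. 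Note that the wiring is random but fixed once drawn.

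The first key step is a one-step contraction estimate for $3$-majority with a fixed random wiring. Suppose the predecessor block has disagreement fraction $\alpha = D/d$. An honest neuron in the successor errs only if at least two of its three sampled inputs are wrong, which happens with probability $3\alpha^2-2\alpha^3 \le 3\alpha^2$ over the wiring. There is a subtlety: the wiring is fixed while the state evolves. We handle it by the standard argument from Appendix~\ref{sec:majority} for fixed random graphs, namely a union bound over the possible bad sets of size at most $\beta d$, with $\beta=\Theta(1)$ small. This gives, with failure probability $\exp(-\Omega(d))$ per block, the following statement: for every set $B$ of size $D\le \beta d$ in $S_{j-1}$, the number of neurons of $S_j$ that have at least two inputs in $B$ is at most $C\,D^2/d + C\sqrt{D\log d}$ or so. A cleaner alternative is an expansion-type property: $S_{j-1}\to S_j$ is a random $3$-left-regular bipartite graph, so a set of $D$ successor neurons each hitting $B$ twice must have small neighborhood, which is rare when $D\gg |B|^2/d$. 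A union bound over blocks is affordable because $\exp(-\Omega(d))$ is tiny compared to $1/n$.

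The second step turns this into a recursion. Write
\[
D(t+1) \le q + C\frac{D(t)^2}{d} + C\,\mathrm{err}(D(t)).
\]
Once $D(t)\le k$ we have $D(t)^2/d \le C_{\mathrm{adv}}^2/(\log d)^2$, which is negligible. We must also rule out the low-order term being larger than $k$; this is the main obstacle. For small bad sets, we want that no set $B$ of size at most $k$ has more than about $k/2$ successor neurons with two neighbours in $B$. The count of such configurations is $\binom{d}{k}\binom{d}{r}(3k/d)^{2r}$. For $r\approx k/2$ this is $\exp\bigl(k\log d + r\log d - 2r\log(d/k)\bigr)$, and since $k\approx\sqrt d$, the quantity $\log(d/k)$ is about $\tfrac12\log d$. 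The exponent is then essentially balanced, so the accounting is tight. That is where the $\log d$ factors in $q$ and $k$ come from, and I expect the bookkeeping with $\gamma$ small and $C_{\mathrm{adv}}$ large to close it. The resulting statement is that $\{D\le k\}$ is forward invariant for all blocks simultaneously, for all time, because the wiring property is deterministic once it holds.

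The third step is entering the invariant region. At $t=0$ with flip probability $p<1/2$ constant, Hoeffding gives each block bias $\Theta(d)$ with high probability. The large-bias recovery bound from Appendix~\ref{sec:majority}, which is used in Theorem~\ref{thm:main_reg_sparse} and is robust to an $O(q)$ additive adversarial term per round, drives each block to disagreement at most $k$ within $O(\log d)$ rounds. It does so with failure probability $d^{-\xi}$. A union bound over the $N_{\mathrm{blocks}}=O(m^2)$ blocks succeeds because $\varepsilon(1+\xi)>1$, exactly as in the sparse theorem. The condition $b\ge c_4(\log m)^2$ ensures every cycle has length exceeding the $O(\log d)$ recovery horizon. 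As a result, corruption travelling around a cycle does not wrap around to interfere before contraction occurs. From then on the block signs follow the rotation of $c$ with $d_{\mathrm{BM}}^{(d)}\le k$, taking $s=T \bmod P$, which is the $(k,d)$-weak limit cycle property.
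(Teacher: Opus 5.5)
Your third step is essentially the paper's whole proof. The gap is in the forward-invariance argument of your first two steps, which is supposed to keep the trajectory inside $\{D\le k\}$ forever. The failure there is structural, not bookkeeping: the uniform wiring property you need is false for a random $3$-sparse layer.

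View each honest neuron of $S_j$ as a $3$-element hyperedge on $S_{j-1}$. There are $d$ hyperedges on $d$ vertices, each neuron of $S_{j-1}$ lies in about $3$ of them, and the induced pair graph has average degree about $6$, so it is supercritical. Grow $B$ one neuron at a time, always along a hyperedge that currently meets $B$ in exactly one input; each such step creates at least one new corrupted successor, and each added neuron brings about three fresh hyperedges, so the exploration does not stall. This produces sets with $|B|=D$ and at least $D-1$ honest successors having two inputs in $B$, for $D$ well beyond $k$.

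Hence neither $CD^2/d+C\sqrt{D\log d}$ nor ``about $k/2$ at $D=k$'' can hold uniformly over $B$. Your own count agrees. At $r=k/2$ the exponent $k\log d+r\log d-2r\log(d/k)$ equals $k\log d$, not something balanced. Even with $\binom{d}{k}\le(ed/k)^k$ it is still about $\tfrac{k}{4}\log d>0$, and the first moment only vanishes once $r\ge(1+\delta)k$. So the best worst-case recursion is $D(t+1)\le q+(1+\delta)D(t)$. Since $q\ge 2$, a bad set of size $k$ can have an image larger than $k$, so $\{D\le k\}$ is not invariant for any choice of $\gamma$ and $C_{\mathrm{adv}}$. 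The $\log d$ factors in $q$ and $k$ do not come from this count. Also, Appendix~\ref{sec:majority} contains no fixed-graph union-bound argument of the kind you cite; all of its results are for fresh sampling each round.

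What makes $3$-majority contract at this scale is that the current bad set is essentially independent of the next layer's wiring, not a union bound over all bad sets. The paper exploits exactly this. Along the diagonal ($S_0$ at time $0$, $S_1$ at time $1$, and so on) each block cycle runs the gossip process with fresh samples until it completes a turn. The proof then applies Theorem~\ref{thm:ema2016} block by block:
\begin{itemize}
\item choosing $\gamma$ small puts $q$ inside the budget $\beta\sqrt d/\log d$;
\item the $\Theta(d)$ bias from constant $p<1/2$ fixes the winning opinion;
\item almost-consensus with $O(\sqrt d/\log d)\le k$ exceptions is reached within $O((\log d)^2)$ rounds, with failure probability $d^{-\xi}$;
\item the union bound over the $n/d$ blocks closes because $\varepsilon(1+\xi)>1$;
\item $c_4$ is chosen so that all of this happens before any cycle wraps around.
\end{itemize}
Persistence afterwards rests on the clause of Theorem~\ref{thm:ema2016} that all but $O(\sqrt d/\log d)$ agents keep the winning opinion. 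Note that the adversarially robust input is Theorem~\ref{thm:ema2016}, not the large-bias bound of Theorem~\ref{thm:large_bias}, and it gives $O((\log d)^2)$ rounds rather than $O(\log d)$.

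Your concern that the wiring is reused after a full turn is legitimate, since that persistence clause is stated for fresh sampling. A worst-case expansion lemma cannot close that gap, however. You would need an argument that tracks how the actual bad sets correlate with the reused wiring.
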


\begin{proof}[Proof Sketch.]
    Theorem \ref{thm:ema2016} asserts that the majority process converges even in the presence of a sufficiently small multiple of $\sqrt{d}/\log d$ adversarial agents, and that the resulting almost-consensus has at most a constant multiple of $\sqrt{d}/\log d$ exceptions. Reusing the same union bound over the $n/d$ blocks as in Theorem \ref{thm:main_reg_sparse}, the condition $\varepsilon(1+\xi)>1$ upgrades this per-block high-probability estimate to an overall $1-o(1)$ bound. See Appendix \ref{sec:main_results} for details.
\end{proof}

\begin{remark}
Appendix \ref{sec:main_results} also contains global convergence statements for arbitrary initial configurations. 
In the dense construction of Theorem \ref{thm:main}, every trajectory reaches a limit cycle deterministically after at most one full turn around the longest block cycle. 
Under the hypotheses of Theorem \ref{thm:main_reg_sparse}, any fixed initial configuration quickly reaches some limit cycle with high probability. 
Under the adversarial perturbations of Theorem \ref{thm:adv}, the same argument yields fast convergence to a $(k,d)$-weak limit cycle; see Theorem \ref{thm:global_cycle}.
\end{remark}

\section{Experiments}
    \label{sec:Expt}
    In this section, we present finite-size simulations illustrating the recovery mechanisms behind our theoretical results. We begin with a single sparse block cycle of length $\ell = 200$, with each neuron receiving $h = 3$ inputs. The initial state is assigned as in Step 1 of our construction (Theorem~\ref{thm:main_reg_sparse}); i.e., each neuron in a given block is assigned the same sign. For simplicity, we take all block signs to be $1$, so this experiment isolates the denoising mechanism rather than the period-length effect of nonconstant block sequences. Each neuron in the block cycle is then flipped independently with probability $p$, after which the Hopfield dynamics evolve. We record whether the initial state is recovered before a full cycle completes. Repeating this procedure $50$ times, we estimate the probability of revival of the initial state for varying block sizes $d$ and flip probabilities $p$. The results are shown in Figure~\ref{fig:expt}(a).

Next, we consider a 3-sparse block cycle architecture consisting of two block cycles of lengths $\ell_1 = 150$ and $\ell_2 = 200$, each with block size $d = 10000$, and add random extra incoming connections. We randomly select $k$ neurons in each block and connect them to randomly chosen neurons in the Hopfield network (with edges directed toward these $k$ neurons). These extra connections are not adversarial: they are sampled randomly, and the selected target neurons are ignored in the weak recovery criterion. Thus increasing $k$ need not monotonically hurt revival; at high noise, random extra inputs can even help by providing additional samples from a population that is already biased toward the correct sign, as seen in the highest-noise curve of Figure~\ref{fig:expt}(b). The initial state is again assigned according to Step 1 of our construction (Theorem~\ref{thm:main_reg_sparse}), as before. Each neuron is then flipped independently with probability $p$, and the Hopfield dynamics are run to determine whether the initial state is recovered before a full round completes, ignoring the neurons that receive additional inputs. Repeating this process $50$ times estimates the convergence probability for different values of $k$ and $p$. The corresponding results are shown in Figure~\ref{fig:expt}(b). Additional implementation details appear in Appendix~\ref{sub_app:Expts}. As an additional stress test motivated by Theorem~\ref{thm:adv}, we consider a block cycle with ``truly adversarial nodes''---nodes that update opposite to the majority rule. We observe revival of the initial state in this setting as well; this experiment is detailed in Appendix~\ref{sub_app:Expts}. We also performed an experiment related to Theorem~\ref{thm:global_cycle} (Appendix~\ref{subapp:global}), where we show that any random configuration in a block cycle converges to a state in which all neurons within each block share the same sign. In that experiment $\ell$ is fixed while $d$ varies, so the mild deterioration with $d$ is consistent with the proof condition that the cycle length must grow at least logarithmically in $d$ to leave enough time for consensus before a full rotation.


\begin{figure}[ht]
    \centering
    \begin{subfigure}{0.49\textwidth}
        \centering
        \includegraphics[width=\linewidth]{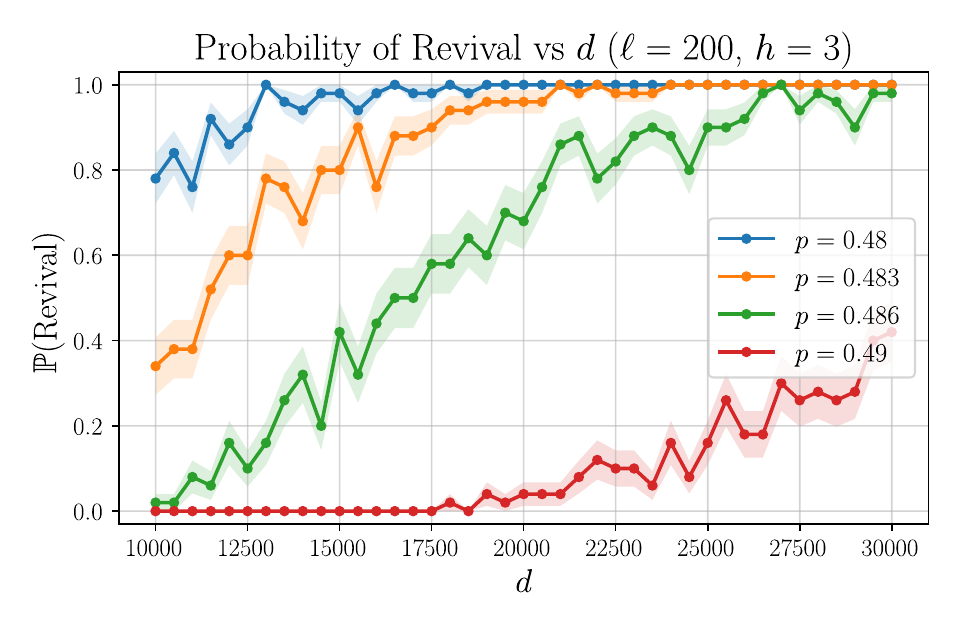}
        \caption{}
    \end{subfigure}
    \hfill
    \begin{subfigure}{0.49\textwidth}
        \centering
        \includegraphics[width=\linewidth]{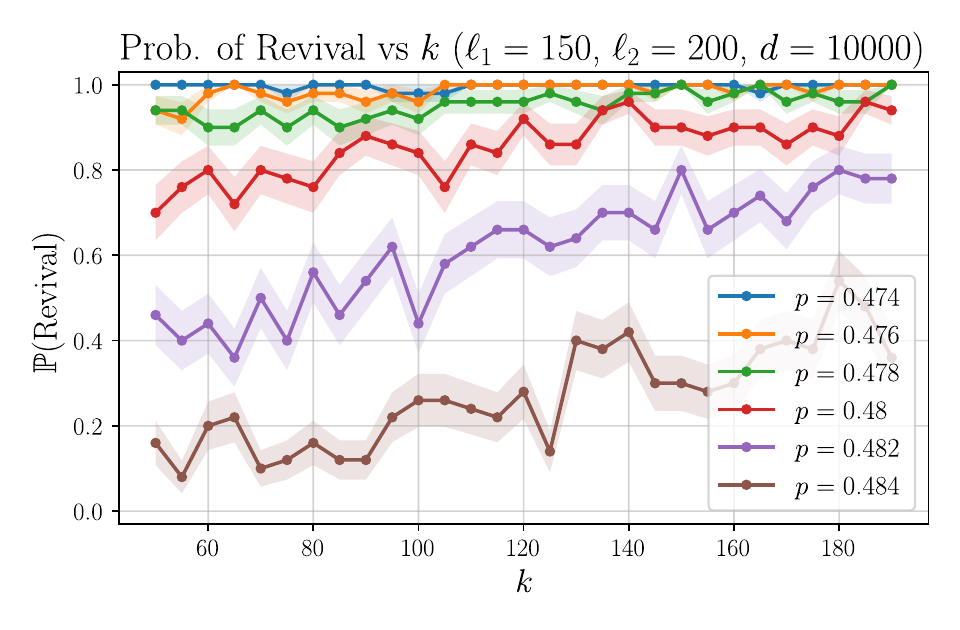}
        \caption{}
    \end{subfigure}

    \caption{Revival probability $\pm$ standard error over $50$ trials. (a) Single sparse block cycle of length $\ell=200$, where each block contains $d$ neurons and each neuron receives $h=3$ random incoming edges from the preceding block. The x-axis varies $d$, and curves correspond to the independent initial flip probability $p$. (b) Sparse block-cyclic architecture with two block cycles of lengths $150$ and $200$, block size $d=10000$, and sparse in-degree $h=3$. The x-axis varies the number $k$ of neurons per block receiving random extra incoming connections, and curves correspond to the initial flip probability $p$.}
    \label{fig:expt}
\end{figure}

\section{Conclusion}
    \label{sec:conc}
    In this work, we showed that classical synchronous asymmetric Hopfield networks can support large, robust sequence memory without nonlinear interactions or continuous states.
A network of $n$ neurons can store $\exp\!\big(\Omega(n/(\log n)^2)\big)$ distinct limit-cycle attractors with periods $\exp\!\big(\Omega(\sqrt n/\log n)\big)$, tolerate independent flips up to $\frac12-\Omega(1/\sqrt{\log n})$ in the dense theorem and up to $\frac12-o(1)$ when $d=\omega(\log n)$, and retain non-trivial basins under sparse and adversarial perturbations.
Thus block-cyclic motifs reveal more power in asymmetric Hopfield dynamics and offer a route to sequence memory via coarse structure rather than fine weight tuning, which is of interest from both a machine learning and biological perspective; learning such motifs is a natural next step.

\paragraph{Limitations.}
Our results are constructive and rely on a prescribed block/block-cycle organization. While the sparse and adversarial-connection results remove all-to-all connectivity and precisely tuned weights, the paper does not provide a local learning rule that forms the topology from unstructured initial connectivity. Our guarantees are also proved for synchronous Little-style updates; whether comparable superpolynomial sequence capacity and noise robustness persist under asynchronous single-neuron Hopfield dynamics remains open. The experiments are synthetic simulations of the theoretical mechanisms rather than evaluations on biological recordings or task-level sequence benchmarks.

\bibliographystyle{plainnat}
\bibliography{references}

\appendix

\section{Aperiodic Binary Sequences}
    \label{sec:necklace}
    \begin{definition}[Möbius function]
The \emph{Möbius function} \(\mu:\mathbb{N}\to\{-1,0,1\}\) is defined by
\[
\mu(n) \;=\;
\begin{cases}
  1 & \text{if $n=1$,}\\[6pt]
  0 & \text{if $n$ is divisible by the square of a prime,}\\[6pt]
  (-1)^r & \text{if $n$ is the product of $r$ distinct primes.}
\end{cases}
\]
\end{definition}

\begin{theorem}
    \label{thm:necklace}
    Let $k\ge 1$ and $\ell\ge 1$ be integers. The number $\mathcal{M}_k(\ell)$ of aperiodic $k$-ary sequences of length $\ell$ is
    \begin{equation*}
        \mathcal{M}_k(\ell) \;=\; \sum_{j\mid \ell} \mu(j)\,k^{\,\ell/j},
    \end{equation*}
    where $\mu$ denotes the Möbius function.
\end{theorem}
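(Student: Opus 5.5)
We prove the formula by Möbius inversion applied to a decomposition of all $k^\ell$ sequences according to their primitive period. Define the \emph{primitive period} of $x\in\{1,\dots,k\}^\ell$ as the least $p\ge 1$ with $\sigma^p(x)=x$.

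\textbf{Step 1: the primitive period divides $\ell$, and $x$ is a repetition of an aperiodic block.} The set $\{j\in\mathbb{Z}:\sigma^j(x)=x\}$ is a subgroup of $\mathbb{Z}$. Since $\sigma^\ell=\mathrm{id}$, this subgroup contains $\ell\mathbb{Z}$, so it equals $p\mathbb{Z}$ for some $p\mid\ell$, and $p$ is the primitive period. Moreover, $x$ is the concatenation of $\ell/p$ copies of its prefix $y=(x_0,\dots,x_{p-1})$. This prefix is aperiodic as a sequence of length $p$: if $\sigma^q(y)=y$ with $1\le q<p$, then $\sigma^q(x)=x$, contradicting the minimality of $p$.

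\textbf{Step 2: the correspondence is a bijection.} Conversely, take any aperiodic $y$ of length $p\mid\ell$ and form its $(\ell/p)$-fold repetition $x$. Then $x$ has primitive period exactly $p$. Indeed, the period subgroup $p'\mathbb{Z}$ of $x$ contains $p$, so $p'\mid p$. Then $\sigma^{p'}(x)=x$ forces $\sigma^{p'}(y)=y$, because the first $p$ coordinates of $\sigma^{p'}(x)$ are $y_{(i+p')\bmod p}$ when $p'\mid p$. Aperiodicity of $y$ then gives $p'=p$. So sequences of length $\ell$ with primitive period $p$ are in bijection with aperiodic sequences of length $p$. Summing over the possible primitive periods yields
\[
k^\ell=\sum_{p\mid \ell}\mathcal{M}_k(p)\qquad\text{for all }\ell\ge1 .
\]

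\textbf{Step 3: Möbius inversion.} Apply classical Möbius inversion to the pair $f(\ell)=k^\ell$ and $g=\mathcal{M}_k$, which satisfy $f(\ell)=\sum_{p\mid\ell}g(p)$. This gives
\[
\mathcal{M}_k(\ell)=\sum_{j\mid\ell}\mu(j)\,k^{\ell/j}.
\]
If we prefer a self-contained argument, inversion follows from $\sum_{j\mid N}\mu(j)=[N=1]$. Expanding $\sum_{j\mid\ell}\mu(j)\sum_{p\mid \ell/j}\mathcal{M}_k(p)$ and swapping the order of summation gives $\sum_{p\mid\ell}\mathcal{M}_k(p)\sum_{j\mid \ell/p}\mu(j)=\mathcal{M}_k(\ell)$.

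The only delicate point is Step 2, namely checking that repeating an aperiodic block does not create a smaller period. The subgroup argument of Step 1 handles this cleanly. The case $k=1$ is consistent with the formula: $\mathcal{M}_1(\ell)=[\ell=1]$.
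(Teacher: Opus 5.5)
The paper never proves this theorem. Appendix~\ref{sec:necklace} states it as a classical fact and uses it only to derive Lemma~\ref{lem:necklace}, so there is no paper proof to compare against. Your argument supplies the standard proof, and it is correct.

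The period set $\{j\in\mathbb{Z}:\sigma^j(x)=x\}=p\mathbb{Z}\supseteq\ell\mathbb{Z}$ gives the decomposition by primitive period. The prefix map is a bijection onto aperiodic words of length $p$, so $k^\ell=\sum_{p\mid\ell}\mathcal{M}_k(p)$. M\"obius inversion then finishes, and your swap of summation using $\sum_{j\mid N}\mu(j)=[N=1]$ is right.

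One wording fix in Step 2: the identity $\sigma^{p'}(x)_i=y_{(i+p')\bmod p}$ for $0\le i<p$ holds because $p\mid\ell$. That is what makes reducing modulo $\ell$ and then modulo $p$ agree with reducing modulo $p$. The condition $p'\mid p$ plays no role there, and the conclusion is unaffected.

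For the paper's actual purpose, your Step 1 alone suffices. Every non-aperiodic sequence is the repetition of its prefix of some length $p\mid\ell$ with $p\le\ell/2$. Hence there are at most $\sum_{p\mid\ell,\,p<\ell}2^{p}\le\tau(\ell)\,2^{\ell/2}$ of them. This is exactly the error bound the proof of Lemma~\ref{lem:necklace} extracts from the exact M\"obius formula. The exact formula gives the precise count for every $\ell$, but the one-sided bound is all the capacity estimate needs.
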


\begin{proof}[Proof of Lemma \ref{lem:necklace}]
By Theorem~\ref{thm:necklace},
\[
\mathcal{M}_2(\ell)=\sum_{j\mid \ell}\mu(j)\,2^{\ell/j}
=2^\ell+\sum_{\substack{j\mid \ell\\ j>1}}\mu(j)\,2^{\ell/j}.
\]
For every proper divisor \(j>1\) of \(\ell\), we have \(\ell/j\le \ell/2\), so
\[
\left|\sum_{\substack{j\mid \ell\\ j>1}}\mu(j)\,2^{\ell/j}\right|
\le \sum_{\substack{j\mid \ell\\ j>1}} 2^{\ell/j}
\le \tau(\ell)\,2^{\ell/2},
\]
where \(\tau(\ell)\) is the number of divisors of \(\ell\). Hence
\[
\bigl|\mathcal{M}_2(\ell)-2^\ell\bigr|\le \tau(\ell)\,2^{\ell/2} \implies \mathcal{M}_2(\ell)=2^\ell(1+o(1)),
\]
since \(\tau(\ell)=o(2^{\ell/2})\).
\end{proof}

\section{LCM of Random Subsets}
    \label{sec:lcm}
    In this section, we prove Theorem \ref{thm:cill_ours}. We restate it for the convenience of the reader.
\begin{theorem}
Let $A$ be the set constructed by choosing each element from $\{b,\dots,m\}$ independently with a constant probability $\delta$, for some integer $b=b(m)$ satisfying $b/m = o(1)$.
Then
\begin{equation*}
    \psi(A) \sim m \frac{\delta \log(\delta^{-1})}{1-\delta},
\end{equation*}
asymptotically almost surely as $m \to \infty$.
\end{theorem}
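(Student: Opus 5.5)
The plan is a coupling argument that reduces the statement to Theorem~\ref{thm:cill_main}. Let $B$ be the random subset of $\{1,\dots,m\}$ in which each element is included independently with probability $\delta$. Couple it with $A$ by setting $A = B\cap\{b,\dots,m\}$, which has exactly the law in the statement. Since $\delta$ is a constant in $(0,1)$ and $\delta m\to\infty$, Theorem~\ref{thm:cill_main} gives $\psi(B)\sim m\,\delta\log(\delta^{-1})/(1-\delta)$ asymptotically almost surely. The right-hand side is $\Theta(m)$, so it suffices to show that $\psi(B)-\psi(A)=o(m)$, deterministically or asymptotically almost surely.

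The two bounds go as follows.

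\emph{Upper bound.} Since $A\subseteq B$, we have $\mathrm{lcm}(A)\mid\mathrm{lcm}(B)$, and therefore $\psi(A)\le\psi(B)$.

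\emph{Lower bound.} Write $B=A\cup B'$ with $B'=B\cap\{1,\dots,b-1\}$. Then
\[
\mathrm{lcm}(B)\le \mathrm{lcm}(A)\cdot\mathrm{lcm}(B')\le \mathrm{lcm}(A)\cdot\mathrm{lcm}\{1,\dots,b-1\}.
\]
Taking logarithms,
\[
\psi(A)\ge\psi(B)-\psi(\{1,\dots,b-1\}).
\]
By Chebyshev's bound, or by the prime number theorem consequence $\psi(k)\sim k$ recalled in Subsection~\ref{subsec:lcm}, we get $\psi(\{1,\dots,b-1\})=O(b)$. The hypothesis $b/m=o(1)$ then makes this $o(m)$. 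If $b$ stays bounded, the correction is $O(1)$, which is also fine.

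Combining the two bounds gives $\psi(A)=\psi(B)+o(m)$ deterministically under the coupling. Hence $\psi(A)\sim m\,\delta\log(\delta^{-1})/(1-\delta)$ asymptotically almost surely.

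The only step needing care is justifying the bound $\psi(k)=O(k)$ uniformly. I would cite the elementary Chebyshev estimate $\mathrm{lcm}\{1,\dots,k\}\le 4^k$, which holds for all $k$, so no use of the prime number theorem is needed. I do not expect a real obstacle, since the heavy lifting is done by Theorem~\ref{thm:cill_main}. The one thing to confirm is that its hypotheses apply, which they do because $\delta$ is constant.
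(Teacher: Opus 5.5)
Your proposal is correct and follows essentially the same route as the paper: the same coupling $A = B\cap\{b,\dots,m\}$, the upper bound $\psi(A)\le\psi(B)$ from $A\subseteq B$, the lower bound via $\mathrm{lcm}(S\cup T)\le \mathrm{lcm}(S)\,\mathrm{lcm}(T)$, and the final appeal to Theorem~\ref{thm:cill_main}. The only difference is cosmetic: you bound $\psi(\{1,\dots,b-1\})$ with the elementary estimate $\mathrm{lcm}\{1,\dots,k\}\le 4^k$, which covers bounded $b$ without fuss, whereas the paper uses the prime number theorem asymptotic $\psi(b-1)\sim b$.
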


\begin{proof}
Let $R_m\subseteq\{1,2,\dots,m\}$ be the random set obtained by including each
integer independently with the same fixed probability $\delta$.
Write
\begin{equation*}
A_m := R_m\cap\{b,b+1,\dots,m\}
\qquad\text{and}\qquad
D_m := R_m\cap\{1,2,\dots,b-1\}.
\end{equation*}
Then $A_m$ has the distribution appearing in the statement of the theorem, and
$R_m=A_m\cup D_m$ is the corresponding random subset of $\{1,\dots,m\}$.

Using the elementary inequality
\begin{equation*}
\operatorname{lcm}(S\cup T)\le \operatorname{lcm}(S)\operatorname{lcm}(T)
\qquad(S,T\subset\mathbb N\ \text{finite}),
\end{equation*}
we obtain
\begin{equation*}
\psi(R_m)
=\log\operatorname{lcm}(A_m\cup D_m)
\le \psi(A_m)+\psi(D_m).
\end{equation*}
Since $D_m\subseteq\{1,\dots,b-1\}$, we have
\begin{equation*}
\psi(D_m)\le \psi(\{1,\dots,b-1\})
=\log\operatorname{lcm}(1,2,\dots,b-1)
\sim b,
\end{equation*}
and therefore $\psi(D_m)=o(m)$ because $b/m\to 0$.

Consequently,
\begin{equation*}
\psi(A_m)\ge \psi(R_m)-o(m).
\end{equation*}
On the other hand, since $A_m\subseteq R_m$, we trivially have
\begin{equation*}
\psi(A_m)\le \psi(R_m).
\end{equation*}
Hence
\begin{equation*}
\psi(A_m)=\psi(R_m)+o(m).
\end{equation*}

Now Theorem \ref{thm:cill_main} applied to $R_m$ gives
\begin{equation*}
\psi(R_m)\sim m\,\frac{\delta\log(\delta^{-1})}{1-\delta}
\qquad\text{a.a.s.}
\end{equation*}
Combining this with the previous display yields
\begin{equation*}
\psi(A_m)\sim m\,\frac{\delta\log(\delta^{-1})}{1-\delta}
\qquad\text{a.a.s.}
\end{equation*}
as claimed.
\end{proof}

\section{Sampling of Block Cycle Lengths}
\label{sec:sum}
\begin{lemma}
    \label{lem:sum_distinct} 
    Let $M \in \mathbb{N}$ and let $z=z(M)$ satisfy $z\to\infty$. Let $\lrc{Y_i}_{i=1}^z$ be a sequence of i.i.d.\ random variables, each uniformly distributed over $\{1,2,\dots,M\}$, and define
    \[
        S \triangleq \sum_{i=1}^z Y_i.
    \]
    Then, with probability at least $1-\exp\bigl(-\Theta(z)\bigr)$,
    \[
        S = \Theta(zM).
    \]
\end{lemma}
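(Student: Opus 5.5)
The plan is to prove matching upper and lower bounds on $S$ separately. Each will follow from a standard concentration inequality for sums of bounded independent variables, after normalizing so the deviation is on the scale of the mean.

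\emph{Upper bound (deterministic).} Since each $Y_i\le M$, we have $S\le zM$ always, so $S=O(zM)$ holds with probability one.

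\emph{Lower bound.} I will write $X_i = Y_i/M\in(0,1]$. These are i.i.d.\ with mean
\[
\mathbb{E}[X_i]=\frac{M+1}{2M}\ge \frac12 .
\]
Hoeffding's inequality for $[0,1]$-valued variables then gives
\[
\Pr\!\left[\sum_{i=1}^z X_i \le \frac{z}{4}\right]\le \Pr\!\left[\sum_{i=1}^z X_i - \mathbb{E}\sum_{i=1}^z X_i \le -\frac{z}{4}\right]\le \exp\!\left(-\frac{2(z/4)^2}{z}\right)=\exp(-z/8).
\]
Consequently $S\ge zM/4$ with probability at least $1-\exp(-z/8)$, which is $1-\exp(-\Theta(z))$.

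Combining the two bounds gives $zM/4\le S\le zM$ on this event, i.e.\ $S=\Theta(zM)$. The hypothesis $z\to\infty$ is used only to make the failure probability vanish. There is no real obstacle. The one point to check is that the bound holds uniformly in $M$, including $M$ small or $M=1$ (where $S=z$ trivially), and the normalization by $M$ handles this, since the Hoeffding bound depends only on $z$.

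In the application in the paper one takes $M=m$ and $z=\lceil 4m\rceil$, which gives $N_{\mathrm{blocks}}=\Theta(m^2)$ with probability $1-e^{-\Theta(m)}$. For the sparse construction, the values are uniform on $\{b,\dots,m\}$; the same argument applies after shifting to $Y_i-b+1$ and using $b/m\to 0$.
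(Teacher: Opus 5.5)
Your proof is correct and follows essentially the same route as the paper, which applies Hoeffding's inequality to summands bounded in $[1,M]$ to get a two-sided relative-deviation bound around $\mu = z(M+1)/2$. Your variant is slightly cleaner: you pair the deterministic bound $S\le zM$ with a one-sided Hoeffding bound on $Y_i/M$, which gives the explicit failure probability $e^{-z/8}$ and avoids the degenerate $(M-1)^2$ denominator in the paper's bound when $M=1$.
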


\begin{proof}
    The mean of $S$ equals
    \begin{align}
    \label{eq:E[S]-sum}
        \mu \triangleq \E{S}
        = \E{\sum_{i=1}^z Y_i}
        = \sum_{i=1}^z \E{Y_i}
        = z\frac{M+1}{2}.
    \end{align}
    Since $\E{Y_i}=(M+1)/2$, we have $\mu=\Theta(zM)$.

    Let $\varepsilon>0$. Since each $Y_i \in \{1,2,\dots,M\}$, the summands in $S$ are bounded in $[1,M]$. Hence, by Hoeffding's inequality,
    \begin{align}
        \mathbb{P}\lrp{|S-\mu|\ge \varepsilon \mu}
        &\le 2\exp\left(\frac{-2\mu^2\varepsilon^2}{\sum_{i=1}^z (M-1)^2}\right) \\
        &= 2\exp\left(\frac{-2\mu^2\varepsilon^2}{z(M-1)^2}\right).
    \end{align}
    Using $\mu=\Theta(zM)$, the exponent is $-\Theta(z)$. Therefore,
    \[
        \mathbb{P}\lrp{|S-\mu|\ge \varepsilon \mu}
        = \exp\bigl(-\Theta(z)\bigr).
    \]

    Consequently, with probability at least $1-\exp\bigl(-\Theta(z)\bigr)$,
    \[
        S = (1\pm \varepsilon)\mu = \Theta(zM).
    \]
\end{proof}

\begin{corollary}
    \label{coro:sum}
    Let $\{Y_i\}_{i=1}^z$ be i.i.d.\ uniform on $\{b,\dots,m\}$, and assume $b/m=o(1)$.
    Let
    \[
        M \triangleq m-b+1,
        \qquad
        z \triangleq \lceil 4M\rceil.
    \]
    Then, with probability at least $1-\exp\bigl(-\Theta(M)\bigr)$,
    \[
        \sum_{i=1}^z Y_i = \Theta(m^2).
    \]
\end{corollary}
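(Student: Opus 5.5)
The plan is to reduce to Lemma~\ref{lem:sum_distinct} by an affine shift. Write $Y_i = (b-1) + U_i$, where the $U_i$ are i.i.d.\ uniform on $\{1,\dots,M\}$ and $M=m-b+1$. Then
\[
\sum_{i=1}^z Y_i = z(b-1) + \sum_{i=1}^z U_i .
\]
Since $b/m=o(1)$, we have $M = m(1-o(1))$, so $M=\Theta(m)$ and $M\to\infty$. Hence $z=\lceil 4M\rceil = 4M\to\infty$ (as $M$ is an integer), and the hypothesis $z\to\infty$ of Lemma~\ref{lem:sum_distinct} holds.

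Applying Lemma~\ref{lem:sum_distinct} to the $U_i$ shows that, with probability at least $1-\exp(-\Theta(z)) = 1-\exp(-\Theta(M))$,
\[
\sum_{i=1}^z U_i = \Theta(zM) = \Theta(M^2) = \Theta(m^2).
\]
The deterministic shift satisfies $0\le z(b-1)\le 4M b = o(m^2)$, because $b=o(m)$ and $M\le m$. So on the same event the total is $\Theta(m^2)+O(o(m^2))=\Theta(m^2)$.

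For the bound directly, the lower bound is simply $\sum_i Y_i \ge \sum_i U_i$. The upper bound is $\sum_i Y_i\le zm = 4Mm = O(m^2)$, which holds deterministically. Only the lower tail actually needs the probabilistic statement.

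The only point needing care is the bookkeeping that $\Theta(M)$ and $\Theta(m)$ coincide under $b/m\to 0$. This keeps the probability exponent and the order of the sum expressed in the stated parameters. There is no real obstacle.
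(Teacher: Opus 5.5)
Your proposal is correct and matches the paper's proof. Both shift to $Y_i-b+1$, uniform on $\{1,\dots,M\}$, apply Lemma~\ref{lem:sum_distinct} with $z=4M=\Theta(M)$, and absorb the deterministic term $z(b-1)=o(m^2)$ using $b/m\to 0$ and $M=\Theta(m)$. Your extra observation that the upper bound $\sum_i Y_i\le zm=O(m^2)$ holds deterministically is a harmless refinement the paper does not make explicit.
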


\begin{proof}
    Let
    \[
        X_i \triangleq Y_i-b+1,\qquad i=1,\dots,z.
    \]
    Then $\lrc{X_i}_{i=1}^z$ are i.i.d.\ uniform on $\{1,2,\dots,M\}$.

    Hence, by Lemma~\ref{lem:sum_distinct}, with probability at least
    $1-\exp\!\lrc{\Theta(-M)}$,
    \[
        \sum_{i=1}^z X_i = \Theta(M^2).
    \]
    Moreover,
    \[
        \sum_{i=1}^z Y_i
        = \sum_{i=1}^z \lrc{X_i+b-1}
        = \sum_{i=1}^z X_i + z(b-1).
    \]
    Since $b/m=o(1)$ and $M=m-b+1$, we have $b=o(M)$, while $z=\Theta(M)$.
    Therefore,
    \[
        z(b-1)=\Theta(M)\cdot o(M)=o(M^2).
    \]
    Consequently,
    \[
        \sum_{i=1}^z Y_i = \Theta(M^2).
    \]
    Finally, because $M=m-b+1=m(1-o(1))$, we have $M=\Theta(m)$ and thus
    \[
        \sum_{i=1}^z Y_i = \Theta(m^2).
    \]
\end{proof}

\begin{lemma}
\label{lem:linear_sampling_domination}
Let $I$ be an interval of integers with $|I|=M$, and let $S^{(\mathrm{rep})}_{4M}$ be the subset of elements of $I$ observed at least once when taking $\lceil 4M\rceil$ samples from $I$ uniformly at random with replacement.
There is a coupling with a subset $S^{(\mathrm{ind})}_\delta\subseteq I$, obtained by including each element of $I$ independently with probability $\delta=1-e^{-2}$, such that
\[
    \mathbb{P}\!\left(S^{(\mathrm{ind})}_\delta\subseteq S^{(\mathrm{rep})}_{4M}\right)
    \ge 1-\exp(-\Theta(M)).
\]
\end{lemma}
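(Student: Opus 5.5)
The natural route is Poissonization. Replace the fixed number $z=\lceil 4M\rceil$ of samples with replacement by a random number $Z\sim\mathrm{Poisson}(\lambda)$, where $\lambda=2M$ is slightly smaller than $4M$. Under this Poissonized sampling, the numbers of times the $M$ elements of $I$ are observed are independent $\mathrm{Poisson}(\lambda/M)=\mathrm{Poisson}(2)$ random variables. Consequently, the set $S^{\mathrm{Pois}}$ of elements observed at least once contains each element of $I$ independently with probability $1-e^{-2}=\delta$. So $S^{\mathrm{Pois}}$ has exactly the law of $S^{(\mathrm{ind})}_\delta$, and we take $S^{(\mathrm{ind})}_\delta:=S^{\mathrm{Pois}}$ in the coupling.

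Next, couple the Poissonized process with the fixed-size one on a common probability space. Let $U_1,U_2,\dots$ be an infinite i.i.d.\ sequence of uniform samples from $I$, and let $Z\sim\mathrm{Poisson}(2M)$ be independent of it. Set $S^{\mathrm{Pois}}=\{U_1,\dots,U_Z\}$ and $S^{(\mathrm{rep})}_{4M}=\{U_1,\dots,U_{\lceil 4M\rceil}\}$. The thinning property of the Poisson distribution confirms that the counts of $\{U_1,\dots,U_Z\}$ across elements are independent $\mathrm{Poisson}(2)$, so the marginal law in the first paragraph is correct. The second set has exactly the required law by construction. On the event $\{Z\le\lceil 4M\rceil\}$, the first set is a prefix-set of the second, so $S^{(\mathrm{ind})}_\delta\subseteq S^{(\mathrm{rep})}_{4M}$.

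It remains to bound $\mathbb{P}(Z>4M)$ for $Z\sim\mathrm{Poisson}(2M)$. A standard Chernoff bound for the Poisson distribution gives $\mathbb{P}(Z\ge 2\lambda)\le (e/4)^{\lambda}=\exp(-(2\log 2-1)\cdot 2M)$, which is $\exp(-\Theta(M))$. This yields the claimed probability $1-\exp(-\Theta(M))$.

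The only step requiring care is the identification of the law of $S^{\mathrm{Pois}}$ with independent Bernoulli$(\delta)$ inclusion. This relies on the Poisson splitting property, which I would cite or verify directly via the multinomial-Poisson generating function. The tail bound is routine. The choice $\lambda=2M$ is what makes $\delta=1-e^{-2}$ and leaves a constant-factor gap to $4M$.
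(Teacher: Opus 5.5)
Your proposal is correct and is essentially the paper's proof. The paper also Poissonizes with independent $\mathrm{Poisson}(2)$ counts, notes that $T=\sum_a X_a\sim\mathrm{Poisson}(2M)$ and that conditionally on $T$ the counts are multinomial, and realizes them as the first $T$ of $\lceil 4M\rceil$ uniform samples on the event $\{T\le\lceil 4M\rceil\}$. You build the same coupling from the other direction, starting from an i.i.d. sequence and an independent Poisson index and invoking the splitting property, and your explicit tail bound $(e/4)^{2M}$ supplies the constant that the paper leaves implicit.
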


\begin{proof}
For each $a\in I$, let $X_a$ be an independent Poisson random variable with mean $2$, and set
\[
    S^{(\mathrm{ind})}_\delta=\{a\in I:X_a\ge 1\}.
\]
Then $S^{(\mathrm{ind})}_\delta$ includes each element independently with probability $\delta=1-e^{-2}$.
Let $T=\sum_{a\in I}X_a$.
Then $T$ is Poisson with mean $2M$, and hence
\[
    \mathbb{P}(T>\lceil 4M\rceil)\le \exp(-\Theta(M)).
\]
Conditional on $T=t$, the vector $(X_a)_{a\in I}$ has the multinomial distribution obtained from $t$ independent uniform samples from $I$.
Therefore, on the event $T\le \lceil 4M\rceil$, we can realize these Poisson counts as the first $T$ samples in a sequence of $\lceil 4M\rceil$ independent uniform samples from $I$, and fill the remaining positions with independent uniform samples.
The set of elements seen in the full sequence has the distribution of $S^{(\mathrm{rep})}_{4M}$ and contains $S^{(\mathrm{ind})}_\delta$ whenever $T\le \lceil 4M\rceil$.
This proves the claim.
\end{proof}

\section{Proof of Main Results}
\label{sec:main_results}
\subsection{Proof of Theorem \ref{thm:main}}
\begin{proof}
We prove statements (1)--(3) for block-cyclic architectures as defined in the statement.

\medskip
\noindent\textbf{Step 1: Constructing limit cycles.}  
Consider the following state of the block-cyclic architecture: arbitrary signs are given to all neurons with one constraint, namely that all neurons in a given block must have the same sign. We claim that such configurations lead to a limit-cycle attractor. To see this, note that all neurons in the same block receive identical inputs at every synchronous update because their incoming signals are identical (and all weights are $+1$).  
Hence, if every block is initially monochromatic (all $d$ neurons share the same sign), then the next update again produces a monochromatic state for each block, which is just a one-step rotation of the previous state.  
The effective state of each block cycle is then a binary string of length $\ell_i$, which is rotated synchronously at each time step.  
Therefore, the block-cyclic architecture exhibits a limit cycle, where each block cycle is an independent rotating binary string. Having shown that our configurations lead to a limit cycle, we now impose another constraint on the configurations: all binary strings on the block cycles are aperiodic, meaning no nontrivial rotation fixes them. In this case, the overall period of the block-cyclic architecture is given by the $\text{lcm}$ of the individual periods, i.e., overall period $= \text{lcm}\{\ell_i\}_{i=1}^z$, where $|A| = z$.
Moreover, any state whose blockwise strict majority signs agree with one of these monochromatic cycle states is mapped in one synchronous update to the next state of the cycle.
Since $d\ge 2$ for all sufficiently large $m$, this gives states outside the cycle in its basin, so the cycle is a limit-cycle attractor.
Let $\mathcal A$ be the family of limit-cycle attractors obtained from these aperiodic assignments.

\medskip
\noindent\textbf{Step 2: Counting limit cycles (proof of (1)).}  
We start by counting how many such distinct limit-cycle attractors are there. For a single block cycle of length $\ell$, the number of aperiodic binary labelings equals the number $\mathcal{M}_2(\ell)$ of binary aperiodic sequences. By Lemma \ref{lem:necklace}, we have
\begin{equation*}
    \mathcal{M}_2(\ell)={2^\ell}(1+o(1)).
\end{equation*}

\noindent Since the block cycles are disjoint, there are $\prod_{i=1}^z \mathcal{M}_2(\ell_i)$ aperiodic monochromatic states of this form.
These are states rather than global orbits: the same limit cycle is counted once for each point on its orbit.
Since the block strings are aperiodic, every such global orbit has period
\(P=\operatorname{lcm}(\ell_1,\dots,\ell_z)\).
Therefore,
\begin{equation*}
    N_{\mathrm{cycles}}:=|\mathcal A| = \frac{1}{P}\prod_{i=1}^z \mathcal{M}_2(\ell_i).
\end{equation*}
Hence we have
\begin{equation*}
    N_{\mathrm{cycles}} = \frac{2^{N_{\mathrm{blocks}}}(1+o(1))^z}{P}, 
    \qquad N_{\mathrm{blocks}}:=\sum_{i=1}^z\ell_i.
\end{equation*}

\noindent Taking $\log$ on both sides and using $N_{\mathrm{blocks}}=n/d$,
\begin{equation*}
    \log N_{\mathrm{cycles}} \ge (\log 2)\frac{n}{d} +z\log(1+o(1)) - \log P.
\end{equation*}
Since each $\ell_i\le m$, we have \(P\le \operatorname{lcm}(1,\dots,m)\), and hence \(\log P\le \psi(m)=O(m)\).
By Lemma \ref{lem:sum_distinct}, we have $m=\Theta((n/d)^{\frac{1}{2}})$. Since $d=\lceil K(\log m)^2\rceil$ and $n=\Theta(m^2d)$, it follows that $\log m=\Theta(\log n)$ and hence $d=\Theta((\log n)^2)$. Also noting that $z=\Theta(m)=\Theta((n/d)^{\frac{1}{2}})$ and $N_{\mathrm{blocks}}=n/d = \Theta(n/(\log n)^2)$, the dominant term is $(\ln 2)n/d$, and the latter two terms are negligible.
Hence there exists $c_1>0$ such that
\begin{equation*}
    N_{\mathrm{cycles}}\ge \exp\!\Big(c_1\frac{n}{(\log n)^2}\Big),
\end{equation*}
proving (1).

\medskip
\noindent\textbf{Step 3: Lower bound on the period (proof of (2)).}  
Let 
\begin{equation*}
    P:=\operatorname{lcm}(\ell_1,\dots,\ell_z)
\end{equation*}
be the least common multiple of the block-cycle lengths.
Recall that the multiset $\{\ell_i\}_{i=1}^z$ is chosen uniformly at random with replacement from $\{1,2,\dots,m\}$ with $z = \lceil 4m\rceil$.
Consider the set $Q$ formed by ignoring repeated elements in the multiset $\{\ell_i\}_{i=1}^z$.
By Lemma~\ref{lem:linear_sampling_domination}, this construction contains, with probability $1-\exp(-\Theta(m))$, a set that includes each integer in $\{1,\dots,m\}$ independently with probability $\delta=1-e^{-2}$.
Since adding elements to a set cannot decrease its $\mathrm{lcm}$, Theorem~\ref{thm:cill_main} implies that, with high probability,
\begin{equation*}
    \log P \ge c' m
\end{equation*}
for some $c'>0$ depending only on $\delta$.  
With $m=\Theta((n/d)^{\frac{1}{2}})$ (Lemma \ref{lem:sum_distinct}), this yields
\begin{equation*}
    P \ge \exp\!\Big( c_2\Big({\frac{n}{d}}\Big)^{1/2}\Big)
    = \exp\!\Big( c_2\frac{\sqrt n}{\log n}\Big),
\end{equation*}
proving (2).

\medskip
\noindent\textbf{Step 4: Noise robustness (proof of (3)).}  
Fix any $\mathcal C\in\mathcal A$ and any state $x\in\mathcal C$.
Note that due to the majority updating rule, a block only gives a different signal to its successor block in a block cycle if more than half of its neurons are flipped. Assume that each neuron is flipped independently with probability $p_n \le \frac12-\sqrt{\frac{\log n}{d}}$, and compute the probability that there exists no block with half or more of its neurons flipped.
Let $p_n \le \frac12-\sqrt{\frac{\log n}{d}}$, and let $Y\sim\mathrm{Binomial}(d,p_n)$ be the number of flipped neurons in a given block.  
For $n$, and equivalently $d$ large enough, a Hoeffding's bound gives
\begin{equation*}
    \mathbb{P}(Y \ge d/2) \le \exp\!\left(-2\frac{(d/2-dp_n)^2}{d}\right) \le e^{-2\log n}=n^{-2},
\end{equation*}
There are $N_{\mathrm{blocks}}=n/d$ blocks, so by a union bound the probability that any block has a majority flipped is at most
\begin{equation*}
    \frac{n}{d}\cdot n^{-2}=\frac{1}{dn}=o(1).
\end{equation*}
Thus, with probability $1-o(1)$ no block changes its majority sign, and the next synchronous update maps the perturbed state exactly to \(F(x)\in\mathcal C\).  
The network therefore returns to $\mathcal C$, establishing (3) and completing the proof.
\end{proof}

\subsection{Co-prime block-cycle lengths}
\label{sub_app:coprime}

In our construction in Theorem \ref{thm:main}, the block-cycle lengths $\{\ell_1,\ell_2,\dots,\ell_z\}$ are chosen uniformly at random from $\{1,2,\dots,m\}$. In this subsection, we show that even if these lengths are chosen more carefully so as to be pairwise co-prime, we do not gain a huge improvement.

\begin{remark}
\label{rem:coprime}
If the block-cycle lengths $\{\ell_1,\ell_2,\dots,\ell_z\}$, with each $\ell_i \in [m]$, are chosen to be pairwise co-prime, then the lower bound on the maximum period improves from
\[
\exp\!\left(\Omega\!\left(\frac{\sqrt n}{\log n}\right)\right)
\quad \text{to} \quad
\exp\!\left(\Omega\!\left(\sqrt{\frac{n}{\log n}}\right)\right).
\]
\end{remark}

To see this, observe that the largest possible size of such a set is at most $\pi(m)+1$, where $\pi(m)$ denotes the number of primes up to $m$. By the Prime Number Theorem, the sum $S(m)$ of the elements in this set satisfies
\begin{equation*}
S(m) \sim \frac{m^2}{\log m}.
\end{equation*}
Since the number of neurons is
\begin{equation*}
n \sim \frac{m^2 d}{\log m},
\end{equation*}
it follows that
\begin{equation*}
m \sim \sqrt{\frac{n}{d}\log\!\left(\frac{n}{d}\right)}.
\end{equation*}
Finally, since the logarithm of the maximum period $P$ scales as $\log P \sim m$ and $d \sim \log^2 n$, we obtain
\begin{equation*}
\log P \sim \sqrt{\frac{n}{d}\log\!\left(\frac{n}{d}\right)}
\sim \sqrt{\frac{n}{\log n}}.
\end{equation*}

\subsection{Proof of Theorem \ref{thm:main_reg_sparse}}

\begin{proof}
    To prove Theorem \ref{thm:main_reg_sparse}, we notice its similarity to Theorem \ref{thm:main}. We will see that the first two parts follow by a very similar argument to Theorem \ref{thm:main}. Then we prove the third part.
    
    \medskip
    \noindent\textbf{Step 1: Constructing limit cycles.}  The aperiodic monochromatic assignments used in the dense proof still define limit cycles for the sparse block-cyclic architecture, because the sparse edges transmit the same block value whenever the preceding block is monochromatic.
    We assume aperiodicity here as well, and let $\mathcal A$ denote the corresponding family of attractors.
    By Corollary~\ref{coro:sum}, with probability $1-o(1)$ the total number of blocks satisfies $N_{\mathrm{blocks}}=\Theta(m^2)$.
    Since $d=\lceil m^{2\varepsilon/(1-\varepsilon)}\rceil$ and $n=dN_{\mathrm{blocks}}$, on this event we have $d=\Theta(n^\varepsilon)$ and $n/d=\Theta(n^{1-\varepsilon})$.
    We work on this event throughout the proof.

    \medskip
\noindent\textbf{Step 2: Counting limit cycles (proof of (1)).} As in Step 2 of the proof of Theorem \ref{thm:main}, the count of aperiodic monochromatic states must be divided by the global orbit period \(P=\operatorname{lcm}(\ell_1,\dots,\ell_z)\).
Since \(P\le \operatorname{lcm}(1,\dots,m)\), we have \(\log P=O(m)=o(n/d)\).
Thus the logarithm of $N_{\mathrm{cycles}}:=|\mathcal A|$ is still dominated by $n/d$.
Since here we have $d=\Theta(n^\varepsilon)$, there exists $c_1>0$ such that
\begin{equation*}
    N_{\mathrm{cycles}}\ge \exp\!\Big(c_1n^{1-\varepsilon}\Big),
\end{equation*}
proving (1).

\medskip
\noindent\textbf{Step 3: Lower bound on the period (proof of (2)).}
Let \(I=\{b,\dots,m\}\) and \(M=|I|=m-b+1\).
The multiset of block-cycle lengths is formed by taking \(\lceil 4M\rceil\) uniform samples from \(I\) with replacement.
By Lemma~\ref{lem:linear_sampling_domination}, after ignoring repeats, the sampled lengths contain an independently sampled subset of \(I\) with constant inclusion probability with high probability.
Since adding elements to a set cannot decrease its \(\mathrm{lcm}\), Theorem~\ref{thm:cill_ours} implies that
\begin{equation*}
    P\geq \exp(c_2 m).
\end{equation*}
On the high-probability event from Step 1, \(m=\Theta((n/d)^{1/2})\).
Since here we have $d=\Theta(n^\varepsilon)$, we get
\begin{equation*}
    P\geq \exp(c_2(n^{1-\varepsilon})^{1/2}),
\end{equation*}
proving (2).

\medskip
    \noindent\textbf{Step 4: Noise robustness (proof of (3)).} For $d$ neurons in any block of any given block cycle consider $d$ ``imaginary agents", undergoing a majority dynamics with 2 opinions. For instance, consider a block with some distribution of states (opinions). At the next unit of time, these opinions will be transmitted to the next block, as the Hopfield update rule is a similar majority update rule, and this will be the new state of the ``imaginary agents". Hence we need whichever opinion is popular among these $d$ agents to reach consensus before completing the cycle. The only difference between Hopfield dynamics and majority dynamics is that in case of a tie, majority dynamics chooses an opinion uniformly at random, whereas Hopfield dynamics preserves the old state. Since $h$ is odd by assumption, they are exactly the same here. Also note that in consensus dynamics, the agents are chosen with replacement, which we handle by setting the weights equal to a node’s multiplicity (see Definition \ref{def:spars_sup_nodes_conn}).
    
    \noindent According to Theorems \ref{thm:hmajo_hierachy} and \ref{thm:large_bias}, the agents should reach consensus (the state before they were flipped) in at most $C \log d$ units of time (this is why we keep the cycles longer than $C\log d$) given that the initially popular opinion has at least $d/2+\lambda_1\sqrt{d\log d}$ supporters.
    Since this happens for each block, the entire block cycle must come back to its original state (before flipping) in at most $C_1\log d$ time.
    We will now calculate the probability of the event that each original block opinion has at least $d/2+\lambda_1\sqrt{d\log d}$ supporters after the random flips.

    Let $X_i$ be the number of neurons that remain unflipped in block $i$. Each neuron stays unflipped with probability $1-p = \frac12 + \eta$, so
    \begin{equation*}
        X_i \sim \mathrm{Bin}\left(d, \frac12 + \eta\right).
    \end{equation*}
    Hence the expectation is
    \begin{equation*}
        \mu := \mathbb{E}[X_i] = d\left(\frac12 + \eta\right) = \frac{d}{2} + \eta d.
    \end{equation*}
    Now take $\eta = C_p\sqrt{\frac{\log n}{d}}$. Then
    \begin{equation*}
        \mu = \frac{d}{2} + C_p\sqrt{d \log n}.
    \end{equation*}
    Since $d=\Theta(n^\varepsilon)$, there is a constant $C_\lambda>0$, depending only on $\lambda_1$ and $\varepsilon$, such that for all sufficiently large $n$,
    \begin{equation*}
    \lambda_1 \sqrt{d \log d}
    \le C_\lambda \sqrt{d \log n}.
    \end{equation*}
    Thus the gap is at least
    \begin{equation*}
        \mu - \left(\frac{d}{2} + \lambda_1 \sqrt{d \log d}\right)
        \ge \left(C_p - C_\lambda\right)\sqrt{d \log n}.
    \end{equation*}
    Choose $C_p>C_\lambda$, as allowed by the statement.
    Let
    \begin{equation*}
        a := \left(C_p - C_\lambda\right)\sqrt{d \log n}.
    \end{equation*}
    By Hoeffding's inequality,
    \begin{equation*}
        \mathbb{P}\left(X_i \le \mu - a\right)
        \le \exp\left(-\frac{2a^2}{d}\right).
    \end{equation*}
    Substituting $a$,
    \begin{equation*}
        \mathbb{P}\left(X_i \le \mu - a\right)
        \le \exp\left(-2\left(C_p - C_\lambda\right)^2 \log n\right)
        = n^{-2\left(C_p - C_\lambda\right)^2}.
    \end{equation*}
    Since $\frac{n}{d}=\Theta(n^{1-\varepsilon})$, there is a constant $C_N>0$ such that the union bound gives
    \begin{equation*}
    \mathbb{P}\left(\exists i : X_i \le \frac{d}{2} + \lambda_1 \sqrt{d \log d}\right)
    \le C_N n^{1-\varepsilon} \cdot n^{-2\left(C_p - C_\lambda\right)^2}.
    \end{equation*}
    For sufficiently large $C_p$, this probability tends to $0$. Hence with high probability, for all blocks $i$,
    \begin{equation*}
    X_i > \frac{d}{2} + \lambda_1 \sqrt{d \log d}.
    \end{equation*}

    \noindent Hence we have ensured the assumptions of Theorems \ref{thm:hmajo_hierachy} and \ref{thm:large_bias}. Suppose an $h$ majority process converges in at most $c\log d$ time with probability say $1-1/d^{\xi}$. A union bound over all blocks gives the failure probability
    \begin{equation*}
        \frac{1}{d^{\xi}}\cdot \frac{n}{d}=O\left(n^{1-\varepsilon(1+\xi)}\right)=o(1),
    \end{equation*}
    since $\varepsilon(1+\xi) > 1$. Now choose $c_3$ such that $c_3\log m \geq c\log d$, which ensures that even for the smallest possible cycle, there is enough time to return to consensus, which proves Theorem \ref{thm:main_reg_sparse}.
\end{proof}
\subsection{Proof of Theorem \ref{thm:adv}}
\begin{proof}
By the same calculation as in the proof of Theorem~\ref{thm:main_reg_sparse}, with probability $1-o(1)$, for each block the initial perturbation preserves a bias of at least $\lambda_1 \sqrt{d \log d}$ towards the original block opinion.
Let \(q=\lfloor \gamma \sqrt{d}/\log d \rfloor\) and \(k=\lceil C_{\mathrm{adv}}\sqrt d/\log d\rceil\), as in the theorem statement.
Since only at most $q$ neurons in each block receive adversarial incoming edges, choosing $\gamma$ sufficiently small ensures that the adversarial budget is within the range allowed by Theorem~\ref{thm:ema2016}.

We may therefore apply Theorem~\ref{thm:ema2016} to each block. Under this bias condition and with at most $q$ adversarially affected neurons, the majority dynamics converges back to the original majority state up to at most \(k\) exceptions within $O((\log d)^2)$ rounds, after choosing \(C_{\mathrm{adv}}\) large enough to dominate the hidden constant in Theorem~\ref{thm:ema2016}. The failure probability is at most $d^{-\xi}$ for some constant $\xi>0$. Reusing the same union bound as above over the $n/d=\Theta(n^{1-\varepsilon})$ blocks, the total failure probability is at most
\begin{equation*}
    \frac{1}{d^{\xi}}\cdot \frac{n}{d}=O\left(n^{1-\varepsilon(1+\xi)}\right)=o(1),
\end{equation*}
by the hypothesis $\varepsilon(1+\xi)>1$. Thus the entire network returns to the $(k,d)$-weak limit cycle associated with the original monochromatic orbit through $c$ with probability $1-o(1)$. The convergence time required is say $c'(\log d)^2$. Choose $c_4$ such that $c_4(\log m)^2 \geq c'(\log d)^2$, which ensures that even for the smallest possible cycle, there is enough time to return to consensus.
\end{proof}

\subsection{Global convergence from arbitrary initial configurations}
\label{subapp:global}
\begin{theorem}
\label{thm:global_cycle}
Fix any initial configuration $x(0)\in\{-1,+1\}^n$.

For every realization of the dense block-cyclic architecture in Theorem \ref{thm:main}, the synchronous Hopfield trajectory starting from $x(0)$ reaches a limit cycle within at most $m$ rounds.

Under the assumptions of Theorem \ref{thm:main_reg_sparse}, with probability $1-o(1)$ over the random construction of the $h$-sparse block-cyclic architecture, the synchronous Hopfield trajectory starting from $x(0)$ reaches a limit cycle within $O(\log d)$ rounds.

Under the assumptions of Theorem \ref{thm:adv}, with at most \(q=\lfloor \gamma \sqrt d/\log d\rfloor\) adversarially perturbed neurons per block, with probability $1-o(1)$ the synchronous Hopfield trajectory starting from $x(0)$ reaches a $(k,d)$-weak limit cycle within $O((\log d)^2)$ rounds, where \(k=\lceil C_{\mathrm{adv}}\sqrt d/\log d\rceil\).
\end{theorem}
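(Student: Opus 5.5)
I plan to handle the three regimes separately, all based on one observation: in every construction, each block receives input only from its predecessor in the same block cycle. All neurons of a block therefore act as (noisy or exact) copies of a majority function of the predecessor block's state.

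\textbf{Dense case.} In $\BCA{d,m}$ every neuron in block $S_{j+1}$ has the same input $h=\sum_{u\in S_j}x_u(t)$. If $h\neq 0$, all of $S_{j+1}$ becomes monochromatic at time $t+1$. If $h=0$, each neuron keeps its old value. I will first show that monochromaticity propagates. If $S_j$ is monochromatic at time $t$, then $h=\pm d\neq 0$, so $S_{j+1}$ is monochromatic at every time $t'\ge t+1$. Separately, I need a block with $h\neq 0$ to get the process started. The issue is ties, which can occur when $d$ is even. Suppose some block has a nonzero sum at time $0$ (ties are the only obstruction). Then its successor is monochromatic from time $1$ on, and by induction along the cycle all $\ell\le m$ blocks are monochromatic by time $\ell\le m$. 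If instead every block of a cycle is tied forever, nothing changes there at all: each neuron keeps its state, which is a fixed point and hence a (trivial) limit cycle.

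I therefore need to handle the all-tied-at-time-$0$ case explicitly. In that case $F$ fixes that cycle's coordinates, so that cycle is already periodic. Once every cycle is either fully monochromatic, and hence rotating, or frozen, the global state lies on a periodic orbit, which is a limit cycle in the sense of Definition~\ref{def:limit_cycle}. This takes at most $\max\ell_i\le m$ rounds.

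\textbf{Sparse case.} With $h$ odd there are no ties, and the block dynamics is exactly synchronous $h$-majority, with the randomness coming from the construction. From an arbitrary $x(0)$ I will invoke the consensus results reviewed in Appendix~\ref{sec:majority} (Theorems~\ref{thm:hmajo_hierachy} and \ref{thm:large_bias}, together with the unbiased-start version of $h$-majority convergence). These give that, from any initial configuration, $h$-majority reaches consensus within $O(\log d)$ rounds with probability $1-d^{-\xi}$.

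Here is the subtle point. The agents in round $t+1$ are the next block in the cycle, not the same block. The wiring seen by the block's ``virtual'' opinion therefore changes each round, but it is independent across successive blocks, so a process tracked along the cycle is exactly majority dynamics with fresh random samples each round. This matches the model of those theorems, and it is the same reduction used in Step~4 of the proof of Theorem~\ref{thm:main_reg_sparse}.

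The union bound over the $n/d$ starting blocks then gives failure probability $O(n^{1-\varepsilon(1+\xi)})=o(1)$. The requirement $b\ge c_3\log m$ ensures that consensus is reached before a wavefront wraps around the shortest cycle. Once all blocks are monochromatic, the state is a rotating monochromatic configuration and hence lies on a limit cycle.

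\textbf{Adversarial case.} This is identical to the sparse case, with Theorem~\ref{thm:ema2016} replacing the consensus theorem. It yields almost-consensus with at most $k$ exceptions per block within $O((\log d)^2)$ rounds, after which the trajectory stays within block-max distance $k$ of a monochromatic orbit. That is the $(k,d)$-weak limit cycle, and $b\ge c_4(\log m)^2$ supplies enough time.

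\textbf{Main obstacle.} I expect the hardest step to be justifying consensus from an arbitrary, possibly unbiased, start. The large-bias theorem does not apply there, so I would first try to cite the unbiased $h$-majority convergence result in Appendix~\ref{sec:majority}. The fixed-wiring versus fresh-sample issue is the second place where care is needed, since $x(0)$ is fixed before the random graph is drawn.
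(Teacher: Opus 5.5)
Your proposal is correct and follows the paper's proof essentially step for step. In the dense case you use the same tied/untied dichotomy, with monochromaticity sweeping around each cycle within $\ell\le m$ rounds; in the sparse and adversarial cases you use the same reduction of each block cycle, tracked from each of the $n/d$ starting blocks, to $h$-majority gossip with fresh edge layers before wrap-around, together with Theorem~\ref{thm:hmajo_hierachy} (respectively Theorem~\ref{thm:ema2016}), the length condition on $b$, and a union bound. The one step worth making explicit in the dense case is that a nonzero block sum, and hence monochromaticity, is absorbing: a block either keeps its state or becomes monochromatic, and this is what justifies ``monochromatic from time $1$ on.''
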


\begin{proof}
We begin with the dense block-cyclic architecture. Fix one dense block cycle $S_0,S_1,\dots,S_{\ell-1}$ of length $\ell$. If the sum of the neuron states in $S_j$ at time $t$ is nonzero, then every neuron in $S_{j+1}$ receives the same nonzero input at time $t+1$, so $S_{j+1}$ becomes monochromatic at time $t+1$. If instead the sum of the neuron states in $S_j$ is zero, then every neuron in $S_{j+1}$ receives input zero and therefore keeps its previous state by the convention $\operatorname{sign}(0)=x_i(t)$. Consequently, if every block in the cycle is balanced at time $0$, then the entire block cycle is already a fixed point. Otherwise choose one block whose neuron states have nonzero sum at time $0$. Its successor becomes monochromatic after one round, the next block becomes monochromatic one round later, and so on. After at most $\ell$ rounds every block in that dense block cycle is monochromatic. From that moment onward the block cycle evolves by cyclically shifting these monochromatic signs, and hence it has entered a limit cycle. Since the dense block-cyclic architecture is a disjoint union of such block cycles and every length is at most $m$, the full dense network reaches a limit cycle within at most $m$ rounds.

We first consider the non-adversarial sparse block-cyclic architecture. Fix one block cycle $S_0,S_1,\dots,S_{\ell-1}$ of length $\ell$, and for $0\le t<\ell$ let $Y_t\in\{-1,+1\}^d$ denote the state of $S_t$ at global time $t$. Because adjacent blocks are connected by independent $h$-sparse random bipartite graphs and $h$ is odd, the evolution $Y_0,Y_1,\dots$ agrees with the $h$-majority gossip process on $d$ agents for as long as we do not complete a full turn around the block cycle. By choosing the constant $c_3$ in Theorem \ref{thm:main_reg_sparse} sufficiently large and using $\log d=\Theta(\log m)$, we may assume that every block cycle has length at least $b\ge c\log d$, where $c$ is the convergence constant from Theorem \ref{thm:hmajo_hierachy}. Hence, before any block cycle repeats an edge layer, Theorem \ref{thm:hmajo_hierachy} implies that $Y_t$ becomes monochromatic within $T=O(\log d)$ rounds with failure probability at most $d^{-\xi}$.

Applying the same argument to each of the $n/d=\Theta(n^{1-\varepsilon})$ blocks and taking a union bound, the total failure probability is at most
\begin{equation*}
    \frac{n}{d}\cdot d^{-\xi}=O\left(n^{1-\varepsilon(1+\xi)}\right)=o(1).
\end{equation*}
Therefore, with probability $1-o(1)$, after $T=O(\log d)$ rounds every block is monochromatic. From that time onward each block simply copies the common sign of its predecessor, so the network evolves by cyclically shifting the resulting binary labels on the block cycles. Hence the trajectory has entered a limit cycle.

For the adversarial extension, the same reduction yields on each block cycle an $h$-majority process on $d$ agents with at most \(q=\lfloor \gamma \sqrt d/\log d\rfloor\) adversarially perturbed agents. By choosing $c_4$ sufficiently large in Theorem \ref{thm:adv}, we may assume that every block cycle has length at least $b\ge c'(\log d)^2$, where $c'$ is the convergence constant from Theorem \ref{thm:ema2016}. Hence Theorem \ref{thm:ema2016} implies that within $T'=O((\log d)^2)$ rounds each block reaches consensus on some sign up to at most \(k=\lceil C_{\mathrm{adv}}\sqrt d/\log d\rceil\) exceptional neurons, with failure probability at most $d^{-\xi}$. The same union bound over the $n/d$ blocks again gives total failure probability $o(1)$. Once every block is monochromatic up to those $k$ neurons, the regular part of the network again evolves by cyclically shifting the resulting binary labels. Equivalently, the full trajectory stays within block-max Hamming distance at most $k$ from the corresponding monochromatic orbit for all subsequent times, and therefore it has entered a $(k,d)$-weak limit cycle.
\end{proof}

\section{Background on Opinion Dynamics}
\label{sec:majority}
Opinion dynamics protocols are fundamental models for information spreading and consensus formation in distributed systems. They capture how agents interact locally yet produce global convergence through repeated, randomized communication. We describe the widely studied Gossip model \citep{majority_dynamics_Berenbrink, clementi, Becchetti2015}.

Let $N \in \mathbb{N}$ be the number of agents. The system is represented by a complete graph on $N$ anonymous and identical agents. Time proceeds in discrete, synchronous rounds indexed by $t \in \mathbb{N}_0$. At each time $t$, the global configuration is a vector $C_t \in \{a,b\}^N$ assigning to every agent an opinion from a finite set (here $\{a,b\}$).

In every round, all agents are activated simultaneously. Each agent $u$ independently samples a multiset of $j$ agents $v_1, \dots, v_j$ uniformly at random with replacement from the population. Self-sampling is allowed, i.e., $v_i = u$ is possible. The sampling of different agents and different rounds is mutually independent.

Given the sampled opinions, agent $u$ computes its new opinion according to a deterministic update rule based solely on the multiset $\{C_t(v_1), \dots, C_t(v_j)\}$. In the $j$-majority dynamics, the update rule is defined as follows: agent $u$ adopts the majority opinion among the sampled opinions. In case of a tie (which may occur when $j$ is even), the agent breaks the tie uniformly at random.

All agents perform this sampling and decision step using the configuration $C_t$. After all agents have determined their new opinions, the system performs a synchronous update, producing the next configuration $C_{t+1}$.

Thus, the evolution $(C_t)_{t \ge 0}$ defines a discrete-time stochastic process on $\{a,b\}^N$, where transitions are governed by independent random sampling and simultaneous opinion updates across all agents.

In the study of opinion dynamics, a quantity of interest is the convergence time, i.e., how many rounds are required before all agents reach the same state. We now state a result by \cite{majority_dynamics_Berenbrink} on the convergence time of $j$ majority.

\begin{theorem}[Convergence of $j$-Majority in the Gossip Model]
\label{thm:hmajo_hierachy}
Let $j \ge 3$ be a fixed integer, and let $(C_t)_{t \ge 0}$ be the stochastic process defined above. Then there exists a constant $c = c(j) > 0$ such that, for any initial configuration $C_0 \in \{a,b\}^N$, the process reaches consensus within $O(\log N)$ rounds with high probability. More precisely, if $T$ denotes the (random) convergence time, then
\[
\mathbb{P}\!\left[T \le c \log N \right] \ge 1 - N^{-\xi},
\]
where $\xi = \Omega(1)$.
\end{theorem}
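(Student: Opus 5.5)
The plan is a standard drift-plus-concentration analysis of the bias. Track the count $A_t$ of agents holding opinion $a$ at time $t$, and write $x_t=A_t/N$ and $s_t=A_t-N/2$. Conditional on $C_t$, the agents update independently. Each one adopts $a$ with probability $f(x_t)$, where
\[
f(x)=\Pr\bigl[\mathrm{Bin}(j,x)>j/2\bigr]+\tfrac12\Pr\bigl[\mathrm{Bin}(j,x)=j/2\bigr]
\]
(the tie term is present only for even $j$). Hence $A_{t+1}\sim\mathrm{Bin}(N,f(x_t))$ given $C_t$. I will use three facts about $f$: it is symmetric, $f(1-x)=1-f(x)$; it satisfies $f(1/2)=1/2$; and it has derivative $\lambda:=f'(1/2)>1$ for every fixed $j\ge 3$. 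For odd $j$ one has $f'(1/2)=j\binom{j-1}{(j-1)/2}2^{-(j-1)}$, which is at least $3/2$. Near the endpoints I will use $f(x)\le C_j x^{2}$ for small $x$, since at least two sampled minority opinions are needed to produce a minority outcome when $j\ge3$. These facts split the argument into three phases, and each phase will be shown to take $O(\log N)$ rounds except with probability $N^{-\Omega(1)}$.

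\emph{Phase 1 (symmetry breaking).} Assume $|s_t|\le c_0\sqrt N$. Conditional on $C_t$, $s_{t+1}$ is a sum of $N$ independent bounded terms whose variance is $\Theta(N)$, because $f(x_t)$ stays bounded away from $0$ and $1$ in this regime. By Berry--Esseen anti-concentration, $|s_{t+1}|\ge c_0\sqrt N$ holds with probability at least some constant $q>0$, uniformly over such configurations. Hence the probability of staying in the window for $k\log N$ consecutive rounds is at most $(1-q)^{k\log N}=N^{-\Omega(k)}$.

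\emph{Phase 2 (amplification).} Assume $c_0\sqrt N\le |s_t|\le \varepsilon N$. Linearising $f$ gives $\mathbb E[|s_{t+1}|\mid C_t]\ge \lambda' |s_t|$ for some $\lambda'\in(1,\lambda)$. The fluctuation is $O(\sqrt{N\log N})$ with high probability, and it is $O(\sqrt N\cdot\text{const})$ with probability $1-e^{-\Omega(|s_t|^2/N)}$ relative to the drift gain $(\lambda'-1)|s_t|$. So by Hoeffding, $|s_{t+1}|\ge \lambda''|s_t|$ fails with probability at most $\exp(-\Omega(s_t^2/N))$. Since $|s_t|$ grows geometrically, these failure probabilities are summable. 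Summed along the trajectory they are dominated by the first term, a constant that is smaller than $1$ once $c_0$ is large. So each escape from Phase 1 leads, with probability bounded below, to $|s_t|\ge\varepsilon N$ within $O(\log N)$ rounds. If instead the bias falls back into the window, I restart the Phase 1 argument. Then $O(\log N)$ independent-ish trials, handled through the strong Markov property of the chain $(A_t)$, give failure probability $N^{-\xi}$.

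\emph{Phase 3 (elimination).} Once the minority fraction $y_t$ satisfies $y_t\le 1/2-\varepsilon$, the map $f$ contracts it to a smaller constant in $O(1)$ rounds. Below a small constant, $\mathbb E[y_{t+1}]\le C_j y_t^2$, so $y_t$ decays doubly exponentially and reaches $N^{-1/2}$ in $O(\log\log N)$ rounds; Chernoff bounds apply at every round while $Ny_t\ge\log N$. When the minority count $K$ satisfies $K\le\sqrt N$, the expected next count is at most $C_jK^2/N\le C_j$. One or two further rounds then give $\mathbb E[K_{t+2}]\le N^{-1/2}$ up to constants, and Markov's inequality yields consensus with probability $1-N^{-\Omega(1)}$. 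Consensus is absorbing, so summing the three phase bounds gives $T\le c\log N$ with probability $1-N^{-\xi}$.

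I expect the main obstacle to be the interface between Phases 1 and 2. The drift is only linear in $s_t$ while the noise is of order $\sqrt N$, so near $|s_t|\approx\sqrt N$ neither effect dominates. I would handle this with the restart argument: a constant lower bound on the probability of a clean geometric run from $c_0\sqrt N$ to $\varepsilon N$, valid for large enough $c_0$, together with the constant anti-concentration probability $q$, turns the whole escape from the balanced region into a geometric number of $O(1)$-probability attempts. The final step is to check that the total time over $O(\log N)$ attempts is still $O(\log N)$. The attempts that fail quickly cost $O(1)$ rounds each, and only the final successful attempt costs $O(\log N)$.
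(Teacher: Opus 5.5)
Your argument is correct. It differs from the paper only because the paper gives no proof of this statement: it quotes the result from Berenbrink et al.\ as a black box, with an unspecified exponent $\xi=\Omega(1)$.

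Your drift-plus-concentration proof is self-contained and uses only three properties of the one-step map $f$: $f(1-x)=1-f(x)$, $f'(1/2)>1$, and $f(y)=O(y^2)$ near the endpoints. Its phases make explicit where the time goes: amplifying the bias from order $\sqrt N$ to order $N$ dominates, and elimination takes only $O(\log\log N)$ rounds.

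It also gives control that the citation does not. You can take more attempts in Phases 1--2, use a larger additive $\log N$ slack in the Chernoff steps, and add a few extra rounds at the end of Phase~3; each extra round multiplies the probability that the minority survives by $O(\mathrm{polylog}(N)/N)$. With these, every failure probability can be pushed below $N^{-k}$ for any fixed $k$. So $\xi$ can be made as large as desired at the cost of a larger $c$, which is exactly the quantity the paper's later hypothesis $\varepsilon(1+\xi)>1$ depends on.

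Two steps should be tightened.

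\emph{Even $j$.} You assert $f'(1/2)>1$ without justification. It holds because $2k$-majority with fair tie-breaking has exactly the same $f$ as $(2k-1)$-majority. The two boundary contributions, from the first $2k-1$ samples containing $k$ or $k-1$ copies of $a$, cancel since $\binom{2k-1}{k}=\binom{2k-1}{k-1}$.

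\emph{Time accounting.} Failed Phase~2 attempts need not fail quickly. What your Hoeffding step actually gives is that an attempt fails at its $i$-th step with probability at most $\exp\bigl(-\Omega(c_0^2(\lambda'')^{2i})\bigr)$. Hence failed-attempt durations, like the geometric Phase~1 waits, have geometric tails. Their sum over the $O(\log N)$ attempts is then $O(\log N)$ with probability $1-N^{-\Omega(1)}$, which is what you need.
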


We now state a result by \cite{clementi}, which shows that if the initial configuration has a sufficiently large bias towards one of the opinions, then the initially popular opinion wins at the end with high probability.

\begin{theorem}[Large Bias Implies Correct Consensus]
\label{thm:large_bias}
Let $j \ge 3$ be fixed, and let $(C_t)_{t \ge 0}$ be the $j$-majority gossip process. Let $X_t$ denote the number of agents holding opinion $a$ at time $t$, and assume without loss of generality that $X_0 \ge N/2$.

There exists a constant $\gamma > 0$ such that if the initial bias satisfies
\[
X_0 - (N - X_0) \ge \gamma \sqrt{N \log N},
\]
then the process converges to the consensus configuration in which all agents hold opinion $a$ within $O(\log N)$ rounds with high probability. In particular, if $T$ denotes the convergence time, then
\[
\mathbb{P}\!\left[C_T = (a,\dots,a)\right] \ge 1 - N^{-\Omega(1)}.
\]
\end{theorem}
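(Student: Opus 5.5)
This statement, Theorem~\ref{thm:large_bias}, is a cited result on $j$-majority dynamics, and I would prove it by a drift-plus-concentration analysis of the one-dimensional bias process. Write $s_t := X_t - (N-X_t)$ for the bias and $x_t := X_t/N$ for the fraction holding opinion $a$. Conditional on $C_t$, each agent independently adopts $a$ with probability
\[
f(x_t)=\Pr[\mathrm{Bin}(j,x_t)>j/2],
\]
plus half the probability of a tie when $j$ is even. Hence $X_{t+1}$ given $X_t$ is $\mathrm{Bin}(N,f(x_t))$. The analysis then splits into three phases.

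\emph{Phase 1 (small bias, $\gamma\sqrt{N\log N}\le s_t\le \epsilon N$).} Near $x=1/2$ one has $f(1/2+\delta)=1/2+\alpha_j\delta-O(\delta^3)$ with $\alpha_j>1$ for $j\ge3$; for example, $\alpha_3=3/2$. Thus $\mathbb{E}[s_{t+1}\mid s_t]\ge(1+\beta)s_t$ for some constant $\beta>0$ as long as $s_t\le\epsilon N$. By Hoeffding, $|s_{t+1}-\mathbb{E}[s_{t+1}]|\le \sqrt{2N\cdot c\log N}$ with probability $1-N^{-c}$. If $\gamma$ is chosen large relative to $\sqrt{c}/\beta$, the fluctuation is at most $(\beta/2)s_t$ whenever $s_t\ge\gamma\sqrt{N\log N}$. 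The bias therefore grows by a factor of at least $1+\beta/2$ per round, and it reaches $\epsilon N$ within $O(\log N)$ rounds. A union bound over these rounds costs only $N^{-\Omega(1)}$.

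\emph{Phase 2 (constant bias).} Once $x_t\ge 1/2+\epsilon$, the minority fraction $y_t=1-x_t$ satisfies $1-f(x)\le \kappa y^{2}$ for $j\ge3$, since the minority needs at least two of the $j$ samples. The map $y\mapsto\kappa y^2$ first drives $y$ below a small constant within $O(1)$ rounds, and then squares it each round. Concentration holds as long as $Ny_t\gg\log N$, so $y_t$ falls to $O(\log N/N)$, that is, only polylogarithmically many minority agents remain, within $O(\log\log N)$ further rounds.

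\emph{Phase 3 (endgame).} With $k=O(\log N)$ minority agents, the expected minority count at the next round is at most $N\kappa(k/N)^2=O(\log^2N/N)$. By Markov's inequality, the minority count is zero with probability $1-O(\log^2 N/N)$. This gives the consensus configuration $(a,\dots,a)$.

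The total time is $O(\log N)$, and the failure probability is $N^{-\Omega(1)}$.

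\textbf{Main obstacle.} The hardest step is Phase 1: ensuring that the multiplicative drift dominates the $\Theta(\sqrt{N\log N})$ noise uniformly throughout the growth phase. I would handle it by conditioning step by step on the high-probability Hoeffding event and tracking the lower envelope $s_t\ge(1+\beta/2)^t s_0$. One must also check that the cubic correction in $f$ stays below $\beta\delta/2$ for all $\delta\le\epsilon$, which holds after fixing $\epsilon$ small depending on $j$. The tie-breaking rule for even $j$ only changes $f$ by a symmetric term, so it does not affect $\alpha_j>1$.
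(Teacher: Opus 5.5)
The paper gives no proof of this statement. It quotes it as a black box from \cite{clementi} and uses it only through the constant $\lambda_1$ in the proof of Theorem~\ref{thm:main_reg_sparse}. Your argument is therefore a genuinely different, self-contained route, and it is the standard one.

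Sampling is uniform with replacement and self-sampling is allowed, so $X_{t+1}\mid C_t\sim\mathrm{Bin}(N,f(x_t))$ and the process is a one-dimensional Markov chain. Your three phases are sound:
\begin{itemize}
\item multiplicative drift from $f'(1/2)=\alpha_j>1$, which dominates the $\sqrt{N\log N}$ Hoeffding fluctuations;
\item quadratic contraction $1-f(x)\le\binom{j}{2}y^2$, with a Chernoff floor, down to $O(\log N)$ minority agents in $O(\log\log N)$ rounds;
\item a one-round Markov endgame.
\end{itemize}

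What your route buys over the citation is visibility of the failure exponent. Phase~1 fails with probability at most $O(\log N)\,N^{-\beta^2\gamma^2/8}$, which you can tune through $\gamma$. Your single-round endgame, however, caps the total failure bound at $O(\log^2 N/N)$. That is enough for the stated $N^{-\Omega(1)}$. But the paper actually needs an exponent $\xi$ with $\varepsilon(1+\xi)>1$. From $O(\log N)$ minority agents, each further round leaves a nonzero minority with probability $O(\log^2 N/N)$, and the minority stays polylogarithmic with overwhelming probability. So running the endgame for a constant number $r$ of rounds gives failure $N^{-r+o(1)}$, and $\xi$ can be made as large as desired. The black-box citation hides this.

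One step is misstated, though it is easily repaired. In Phase~2 you say the map $y\mapsto\kappa y^2$ first drives $y$ below a small constant in $O(1)$ rounds. With $\kappa=\binom{j}{2}\ge 3$, this map is expanding for $y>1/\kappa$: for $j=3$ and $y=0.45$, $3y^2\approx 0.61>y$. So the quadratic bound alone cannot carry you from $x_t\approx 1/2+\epsilon$ into the contracting regime.

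The fix is to use $f(x)>x$ on $(1/2,1)$, which holds for every $j\ge 3$. By continuity, $f(x)-x\ge\eta>0$ on the compact interval $[1/2+\epsilon/2,\,1-1/(2\kappa)]$. The per-round fluctuation of $x_t$ is $O(\sqrt{\log N/N})=o(1)$, so $x_t$ crosses this interval in $O(1/\eta)$ rounds with high probability. Only after that does $1-f(x)\le\kappa y^2\le y/2$ give the doubly exponential decay.

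For even $j$, the cleanest justification of $\alpha_j>1$ and $f(x)>x$ is the identity $f_{2k}=f_{2k-1}$ under uniform tie-breaking, rather than the remark that ties contribute a symmetric term. The factor-$2$ mismatch between $s_t\le\epsilon N$ and $x_t\ge 1/2+\epsilon$ (since $s_t=N(2x_t-1)$) is harmless.
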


A \emph{dynamic adversary} with budget $F$ is an adaptive process that, at each round $t$, after observing the configuration $C_t$, may arbitrarily change the opinions of up to $F$ agents (possibly depending on the entire history of the process). The following result by \cite{Becchetti2015} shows robustness under dynamic adversaries.

\begin{theorem}[Robust Convergence under Dynamic Adversaries]
\label{thm:ema2016}
Let $(C_t)_{t \ge 0}$ be the $j$-majority gossip process with $j \ge 3$. Then there exists a constant $\beta > 0$ such that if $F \le \beta \sqrt{N}/\log N$, the following holds: for any initial configuration $C_0 \in \{a,b\}^N$, the process reaches an almost-consensus within $O(\log^2 N)$ rounds with high probability. More precisely, there exists a round $T = O((\log N)^2)$ such that
\[
\left|\{u : C_T(u) = x\}\right| \ge N - O(\sqrt{N}/\log N)
\]
for some opinion $x \in \{a,b\}$, and all but at most $O(\sqrt{N}/\log N)$ agents keep opinion $x$ for any polynomial number of subsequent rounds, despite the adversarial actions.
Equivalently, this event holds with probability at least $1-N^{-\xi}$ for some constant $\xi = \Omega(1)$.

Furthermore, if the initial bias satisfies
\[
\left| \{u : C_0(u)=a\} \right|
-
\left| \{u : C_0(u)=b\} \right|
\ge K\sqrt{N\log N}
\]
for a sufficiently large constant $K>0$, then with high probability the opinion initially held by the majority of agents is the opinion $x$ appearing above.
\end{theorem}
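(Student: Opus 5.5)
The plan is to track the single scalar \emph{bias} $s_t := |\{u : C_t(u)=a\}| - |\{u : C_t(u)=b\}|$. I would show that, under $j$-majority, it undergoes a multiplicative expected drift away from $0$ that the adversary's budget $F$ cannot overcome. I write everything for $j=3$; the general odd $j\ge 3$ case only changes constants, since the drift map stays odd, increasing and expanding at $0$, and even $j$ with random tie-breaking is handled the same way. Write $\alpha_t = (N+s_t)/(2N)$ for the fraction of $a$-agents. Conditional on $C_t$, each agent independently adopts $a$ with probability $3\alpha_t^2-2\alpha_t^3$. So $s_{t+1}$ is, before the adversary acts, a sum of $N$ independent $\pm1$ variables with mean
\[
\mathbb{E}[s_{t+1}\mid C_t] = s_t\Big(\tfrac32 - \tfrac{s_t^2}{2N^2}\Big)
\]
and variance $\Theta(N)$ whenever $|s_t|\le (1-\Omega(1))N$. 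The adversary then shifts $s_{t+1}$ by at most $2F \le 2\beta\sqrt N/\log N$. The proof splits into three phases according to $|s_t|$.

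\textbf{Phase 1 (symmetry breaking, $|s_t|\le C\sqrt{N\log N}$).} This phase is what forces the $O(\log^2 N)$ bound, and I expect it to be the main obstacle. Starting from an arbitrary configuration, possibly with $s_0=0$, the drift alone does not help. I would use anti-concentration of the binomial, via Berry--Esseen: with constant probability $\Omega(1)$, $|s_{t+1}| \ge \max\{\tfrac54|s_t|, c\sqrt N\}$ regardless of the sign chosen. The adversarial shift $2F = o(\sqrt N)$ is negligible against this $\Theta(\sqrt N)$ fluctuation and against the growth $\tfrac12|s_t|$. A run of $O(\log\log N)$ consecutive successes lifts $|s_t|$ from $c\sqrt N$ to $C\sqrt{N\log N}$. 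Failures can send the bias back down, so I would treat $\log|s_t|$ as a random walk with positive drift. Alternatively, a cruder argument suffices: split time into $O(\log N)$ epochs of length $O(\log N)$, each succeeding with constant probability. This yields exit within $O(\log^2 N)$ rounds with probability $1-N^{-\xi}$. The delicate point is making the constant-probability step uniform in $s_t$ while the adversary acts adaptively after seeing $C_t$. I would handle this by noting that the adversary's effect is deterministic given the sample, so it can only subtract at most $2F$ from the realized $|s_{t+1}|$.

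\textbf{Phase 2 (growth, $C\sqrt{N\log N}\le |s_t|\le \varepsilon N$).} By Chernoff, $s_{t+1}$ deviates from its mean by at most $O(\sqrt{N\log N})$ with probability $1-N^{-\Omega(1)}$. That deviation plus $2F$ is at most $\tfrac1{10}|s_t|$, so $|s_{t+1}|\ge \tfrac43|s_t|$ with the sign preserved. After $O(\log N)$ rounds the bias exceeds $\varepsilon N$. This phase also proves the ``furthermore'' clause. If $|s_0|\ge K\sqrt{N\log N}$ with $K\ge C$, Phase 1 is skipped, the sign never flips, and the winning opinion is the initial majority.

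\textbf{Phase 3 (minority collapse and persistence).} Let $m_t$ be the minority count, with $m_t\le (\tfrac12-\varepsilon)N$. The expected new minority is $N(3\mu^2-2\mu^3)$ with $\mu=m_t/N$. This is at most $(1-\Omega(1))m_t$ in general and at most $3m_t^2/N$ once $m_t$ is small. By Chernoff, $m_{t+1}\le 3m_t^2/N + O(\sqrt{(m_t^2/N)\log N}+\log N) + F$ w.h.p. Iterating gives $m_t = O(F+\log N)=O(\sqrt N/\log N)$ after $O(\log N)$ further rounds. The set $\{m\le C'\sqrt N/\log N\}$ is invariant w.h.p.: there $3m^2/N=O(1/\log^2 N)$ and the fluctuation is $O(\log N)$, so the next minority is again at most $F+O(\log N)\le C'\sqrt N/\log N$. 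A union bound over any polynomial number of subsequent rounds, each failing with probability $N^{-\Omega(1)}$, then gives persistence. Combining the three phases with a final union bound yields the stated $1-N^{-\xi}$ guarantee.
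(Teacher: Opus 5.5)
The paper gives no proof of this statement. It is quoted from \cite{Becchetti2015} and used as a black box in the proofs of Theorem~\ref{thm:adv} and Theorem~\ref{thm:global_cycle}, so your outline has to stand on its own. Most of it does. The drift formula $\mathbb{E}[s_{t+1}\mid C_t]=s_t(\tfrac32-\tfrac{s_t^2}{2N^2})$ is correct. Your remark that an adaptive adversary can only move the realized $|s|$ by $2F$ correctly makes every bound uniform in its strategy. Phase~2, including the ``furthermore'' clause, works. Phase~3, with the invariant set $\{m\le C'\sqrt N/\log N\}$ and a union bound, gives persistence.

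The gap is in Phase~1, at exactly the step you flagged.

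\emph{Why your Phase~1 argument fails as written.} Your step lemma only provides a uniform constant $q$, at best about $\tfrac12$, for the event $|s_{t+1}|\ge\max\{\tfrac54|s_t|,c\sqrt N\}$. From that alone, the $r=\log_{5/4}(C\sqrt{\log N}/c)=\Theta(\log\log N)$ consecutive successes you need occur with probability $q^{r}=(\log N)^{-\kappa}$, where $\kappa=\ln(1/q)/(2\ln\tfrac54)\approx1.55$ for $q=\tfrac12$. A chain that climbs one rung with probability $q$ and otherwise resets to $0$ satisfies your lemma. It needs about $(\log N)^{\kappa}$ rounds per successful climb, so an epoch of $O(\log N)$ rounds does \emph{not} succeed with constant probability. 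Getting failure probability $N^{-\xi}$ from this bound would take about $(\log N)^{1+\kappa}\gg\log^2N$ rounds. For the same reason, $\log|s_t|$ is not a positive-drift walk under your lemma: a failure can send $|s_t|$ back near $0$, which is an unbounded negative increment.

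\emph{The missing ingredient.} The success probability tends to $1$ as you climb.
\begin{itemize}
\item If $|s_t|=x\sqrt N$, then $|s_{t+1}|<\tfrac54|s_t|$ forces a sum of $N$ independent $\pm1$ variables to deviate from its mean by at least $(\tfrac14-o(1))x\sqrt N-2F\ge\tfrac15x\sqrt N$. By Hoeffding this has probability at most $e^{-x^2/50}$.
\item Hence, starting from $|s|\ge C_0\sqrt N$ with $C_0$ a large constant, the probability of ever failing before reaching $C\sqrt{N\log N}$ is at most $\sum_{i\ge0}e^{-C_0^2(25/16)^i/50}\le\tfrac12$.
\item By Berry--Esseen, choosing the noise direction aligned with the mean, from any Phase~1 state a single round reaches $|s_{t+1}|\ge C_0\sqrt N$ with probability at least some $p_0(C_0)>0$. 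This holds even after the adversary's $2F=o(\sqrt N)$ shift and uniformly in the history.
\end{itemize}
So each attempt of $O(\log\log N)$ rounds exits Phase~1 upward with probability at least $p_0/2$. By the Markov property, $O(\log N)$ attempts give exit with probability $1-N^{-\xi}$ within $O(\log N\log\log N)$ rounds, well inside the claimed $O(\log^2N)$. With this replacement for your epoch argument, the proof goes through.
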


For later use, fix $\lambda_1>0$ large enough that an opinion count of at least \(N/2+\lambda_1\sqrt{N\log N}\) satisfies the bias hypotheses in Theorems~\ref{thm:large_bias} and~\ref{thm:ema2016}.

\section{Additional Related-Work Comparison and Experimental Details}
\label{app:comparison-experiments}
\subsection{Notation Summary}

\begin{table}[H]
\centering
\caption{Summary of the recurring notation used throughout the paper.}
\label{tab:notation-summary}
\small
\renewcommand{\arraystretch}{1.12}
\begin{tabular}{p{0.25\textwidth} p{0.70\textwidth}}
\toprule
\textbf{Notation} & \textbf{Meaning} \\
\midrule
$n$ & total number of neurons in the network. \\
$x(t) \in \{-1,+1\}^n$ & network state at time $t$. \\
$W=(w_{ij})$ & asymmetric weight matrix; $w_{ij}$ is the edge weight from neuron $j$ to neuron $i$. \\
$F$ & synchronous update map induced by the majority-rule dynamics. \\
$\mathcal C$, $P$ & a limit cycle and its period. \\
$\mathcal A$ & family of constructed limit-cycle attractors. \\
$\BCA{d,m}$ & dense block-cyclic architecture with block size $d$ and scale parameter $m$. \\
$A=\{\ell_1,\dots,\ell_z\}$ & sampled multiset of block-cycle lengths. \\
$\ell$, $\ell_i$ & length of a block cycle. \\
$N_{\mathrm{blocks}}$ & total number of blocks in the architecture. \\
$d$ & number of neurons in each block. \\
$m$ & scale parameter controlling the range of sampled block-cycle lengths. \\
$h$ & odd in-degree used in the $h$-sparse construction. \\
$p$ & independent flip probability in the noise model. \\
$k$ & maximum number of adversarially perturbed neurons per block. \\
$d_{\mathrm{BM}}^{(d)}$ & block-max Hamming distance with respect to the canonical block partition. \\
$h$-sparse $\BCA{d,b,m}$ & sparse block-cyclic architecture in which each neuron receives exactly $h$ inputs from the preceding block. \\
$\psi(m)$ & $ \log \mathrm{lcm}\{1 \leq a \leq m\}$\\
$N_{\text{cycles}}$ & total number of limit cycles in a given directed Hopfield network.\\
$d_H$ & hamming distance\\
$\mathcal{B}(\mathcal{C})$ & basin of attraction of a limit cycle $\mathcal{C}$.\\
\bottomrule
\end{tabular}
\end{table}

\subsection{Construction Figures}
\label{app:construction-figures}

\begin{figure}[H]
    \begin{subfigure}[t]{.4\textwidth}
        \centering
        \resizebox{\textwidth}{!}{\begin{tikzpicture}[
    >=Stealth,
    line cap=round,
    line join=round,
    box/.style={
        rounded corners=6pt,
        draw=blue!70!black,
        fill=blue!5,
        line width=1pt
    },
    neuron/.style={
        circle,
        draw=orange!70!black,
        fill=orange!25,
        line width=1.1pt,
        minimum size=12mm,
        inner sep=0pt
    },
    dot/.style={
        circle,
        fill=gray!75,
        draw=gray!75,
        minimum size=2.2mm,
        inner sep=0pt
    },
    conn/.style={
        ->,
        very thick,
        draw=blue!70!black,
        opacity=0.75
    }
]

\draw[box] (0,0) rectangle (1.7,6);
\node[neuron] (L1) at (0.85,5) {};
\node[neuron] (L2) at (0.85,3.5) {};
\node[neuron] (L3) at (0.85,1) {};

\node[dot] at (0.85,2.6) {};
\node[dot] at (0.85,2.2) {};
\node[dot] at (0.85,1.8) {};

\draw[box] (5,0) rectangle (6.7,6);
\node[neuron] (R1) at (5.85,5) {};
\node[neuron] (R2) at (5.85,3.5) {};
\node[neuron] (R3) at (5.85,1) {};

\node[dot] at (5.85,2.6) {};
\node[dot] at (5.85,2.2) {};
\node[dot] at (5.85,1.8) {};

\foreach \a in {L1,L2,L3}{
  \foreach \b in {R1,R2,R3}{
    \draw[conn] (\a) -- (\b);
  }
}

\end{tikzpicture}}
        \caption{Two (fully-)connected blocks}
        \label{fig:supnode_conn}
    \end{subfigure}
    \hfill
    \begin{subfigure}[t]{.5\textwidth}
        \centering
        \resizebox{.7\textwidth}{!}{\begin{tikzpicture}[
    every node/.style={inner sep=0pt, outer sep=0pt},
    cyclearrow/.style={
        ->,
        very thick,
        line cap=round,
        draw=blue!70!black,
        >=Stealth,
        shorten <=2pt,
        shorten >=2pt
    }
]

\definecolor{nodeborder}{RGB}{255,140,0}
\definecolor{dotcolor}{RGB}{80,80,80}

\def\R{3.2cm}
\def\n{8}

\newcommand{\BlockGlyph}{%
\begin{tikzpicture}[baseline=-0.5ex]
  \begin{scope}
    \fill[rounded corners=5pt,
          draw=nodeborder,
          line width=1.2pt,
          top color=orange!25,
          bottom color=yellow!10] (0,0) rectangle (0.6,1.45);
    \foreach \y in {0.28,0.72,1.16}{
      \fill[dotcolor] (0.30,\y) circle (0.14);
      \fill[white, opacity=0.3] (0.26,\y+0.03) circle (0.04);
    }
  \end{scope}
\end{tikzpicture}%
}

\foreach \i in {1,...,\n} {
    \node (N\i) at ({90-(\i-1)*360/\n}:\R) {\BlockGlyph};
}

\begin{scope}[on background layer]
    \fill[blue!5] (0,0) circle (3.8cm);
    \draw[blue!20, line width=1pt] (0,0) circle (3.8cm);
\end{scope}

\foreach \i in {1,...,\n} {
    \pgfmathtruncatemacro{\j}{\i+1}
    \ifnum\i=\n
        \draw[cyclearrow] (N\i) to[bend left=16] (N1);
    \else
        \draw[cyclearrow] (N\i) to[bend left=16] (N\j);
    \fi
}

\end{tikzpicture}}
        \caption{A block cycle with $\ell=8$ blocks and $d=3$ neurons in each block. Each arrow schematically represents two connected blocks as in Figure~\ref{fig:supnode_conn}.}
        \label{fig:sup_cycle}
    \end{subfigure}
    \caption{A block cycle of (fully-)connected blocks.}
\end{figure}

\subsection{Related-Work Comparison}

\begin{table}[H]
\centering
\caption{Comparison with related sequence-memory results in Hopfield-type networks.}
\label{tab:related-work-comparison}
\resizebox{\textwidth}{!}{%
\begin{tabular}{p{2.8cm} p{3.0cm} p{3.0cm} p{3.0cm} p{3.0cm} p{4.0cm}}
\toprule
\textbf{Work}
& \textbf{Model / setting}
& \textbf{Number of cycles}
& \textbf{Cycle length}
& \textbf{Robustness}
& \textbf{Main distinction} \\
\midrule

Bastolla and Parisi~\citep{bastollaAttractorsFullyAsymmetric1997}
& Fully asymmetric random neural networks
& Mean total number grows linearly in \(n\)
& Typical length grows exponentially in \(n\)
& Basin statistics studied, but no designed retrieval robustness
& Long cycles arise statistically in random asymmetric dynamics rather than from a constructive sequence-memory architecture. \\

Hwang et al.~\citep{hwangNumberLimitCycles2019}
& Asymmetric neural networks
& Computes mean number \(n_L\) of cycles of fixed length \(L\)
& Finite \(L\), with an exponential finite-size cutoff for long cycles
& Basins not analyzed in the counting formalism
& Gives a statistical-mechanics characterization of cycle counts and the chaotic long-cycle regime. \\

Hwang et al.~\citep{hwangNumberLimitCycles2020}
& Diluted neural networks
& Exponentially many finite-length cycles when the complexity is positive
& Fixed finite length, mainly \(L=4\) in the fully asymmetric diluted case
& Basins discussed as motivation, not retrieval robustness
& Shows how dilution changes the exponential complexity of short temporal cycles. \\

Zhang et al.~\citep{zhang2013storing}
& Hopfield-type networks with pseudoinverse learning
& Prescribed admissible cycles
& Depends on prescribed cycle
& Retrieval robustness not analyzed
& Gives an admissibility criterion for prescribed cycles and a pseudoinverse construction, with emphasis on network topology. \\

Muscinelli et al.~\citep{muscinelli2017exponentially}
& Standard asymmetric Hopfield network
& Explicitly constructs a maximal orbit
& \(2^n\)
& No nontrivial basin
& Constructs a maximal-length orbit covering the full state space; the orbit cannot be attractive. \\

Chaudhry et al.~\citep{chaudhry2023long}
& Dense associative memory / modern Hopfield-type model
& Not framed as counting multiple limit-cycle attractors
& Sequence capacity is polynomial or exponential, depending on the nonlinearity
& Probabilistic sequence recall below capacity
& Achieves long prescribed sequence recall using nonlinear interactions beyond the synchronous asymmetric Hopfield model. \\

\textbf{This work}
& synchronous asymmetric Hopfield network with binary neurons and majority updates
& \(\exp(\Omega(n/(\log n)^2))\)
& \(\exp(\Omega(\sqrt n/\log n))\)
& Robust to random flips up to \(1/2-o(1)\); persists under sparsification and adversarial connections
 & Simultaneously obtains many long, robust limit-cycle attractors using a simple block / block-cycle architecture. \\

\bottomrule
\end{tabular}
}
\end{table}

\subsection{Additional Experimental Details}
\label{sub_app:Expts}

\paragraph{Details of Experiments in Section~\ref{sec:Expt}.}
In the first set of experiments, we consider a block cycle of length $\ell = 200$, where each block has size $d$ (varied across the experiment), and each neuron receives $h = 3$ inputs, corresponding to the sparsest network topology allowed by our theoretical results. The initial configuration is assigned as in Step~1 of our construction (Theorem~\ref{thm:main_reg_sparse}), so that all neurons within a given block share the same sign. For simplicity we keep all signs +1 while initializing for all further experiments, unless stated; this isolates the block-level denoising mechanism rather than the period-length effect of nonconstant block sequences. We then independently flip each neuron with probability $p$ (also varied across the experiment), after which the Hopfield dynamics are run to test whether the system returns to the initial configuration. This procedure is repeated $50$ times. In each trial, the network topology is resampled: for every neuron, its $3$ neighbors are selected anew. The retrieval probability is then estimated empirically. The results are presented in Figure~\ref{fig:expt}(a). This experiment illustrates reliable recovery of noisy block-constant states while varying network sparsity, noise, and random topology.

Next, we run a random extra-connection experiment motivated by the weak-recovery setting of Theorem~\ref{thm:adv}. Specifically, we consider a $3$-sparse block-cyclic architecture consisting of two block cycles of lengths $\ell_1 = 150$ and $\ell_2 = 200$, each with block size $d = 10000$. We randomly select $k$ neurons in each block and connect them to randomly chosen neurons in the Hopfield network, with edges directed toward these $k$ neurons. These added edges are not adversarial: they are sampled randomly, and the selected target neurons are ignored when evaluating recovery. The value of $k$ is varied across the experiment. The initial configuration is assigned as in the previous experiment, i.e., all neurons in the (+1) state. Each neuron is then independently flipped with probability $p$, after which the Hopfield dynamics are run to determine whether the system recovers the initial state before completing a full cycle, ignoring the neurons that receive additional inputs. This procedure is repeated $50$ times to estimate the revival probability. In each trial, the network topology is resampled, including the random interconnections. The experiments are conducted over a range of values for $k$ and $p$. The results are shown in Figure~\ref{fig:expt}(b). Since the added inputs are random rather than worst-case, they can sometimes help: when the population is already biased toward the correct sign, extra random samples may reinforce that bias, which explains why the highest-noise curve improves for larger $k$.

\paragraph{Further Experiments.}
As a further stress test motivated by Theorem~\ref{thm:adv}, we study the robustness of the dynamics in the presence of extreme adversaries, namely, nodes whose updates are always opposite to the majority rule. We consider a single $3$-sparse block cycle of length $\ell = 200$ with block size $d = 5000$. We randomly select $k$ neurons in the cycle and modify their update rule so that each chosen neuron updates to the opposite of the state prescribed by the standard Hopfield dynamics. The network is initialized with all neurons in the $+1$ state, after which each neuron is independently flipped with probability $p = 0.45$. We then run the Hopfield dynamics and check whether the system returns to the original limit cycle, disregarding the adversarial neurons when evaluating recovery. The probability of successful revival is estimated over $50$ trials, and the results are shown in Figure~\ref{fig:add_expt}(a). These experiments illustrate that the network can retain recovery behavior even in the presence of highly adversarial nodes.

Next, we probe the length condition behind Theorem~\ref{thm:global_cycle} (Appendix~\ref{subapp:global}) by asking whether a randomly initialized block cycle converges to a configuration in which all neurons within each block share the same sign. We consider a single $3$-sparse block cycle with fixed length $\ell = 20$ and block size $d$, where $d$ varies throughout the experiment. Starting from a uniformly random initial configuration, we run the dynamics and measure the probability that the system converges to a state in which every neuron within a given block has identical sign before one full rotation. The probability is estimated over $50$ trials for each value of $d$. The results are presented in Figure~\ref{fig:add_expt}(b). The slight downward trend as $d$ grows is expected because the theorem requires enough time for the $h$-majority process to reach consensus, namely a cycle length growing at least logarithmically with $d$. Thus this fixed-$\ell$ experiment is best read as a finite-time diagnostic: even with a cycle length that does not scale with $d$, success remains high, while the deterioration is consistent with the predicted need for $\ell = \Omega(\log d)$. 



\begin{figure}[H]
    \centering
    \begin{subfigure}{0.46\textwidth}
        \centering
        \includegraphics[width=\linewidth]{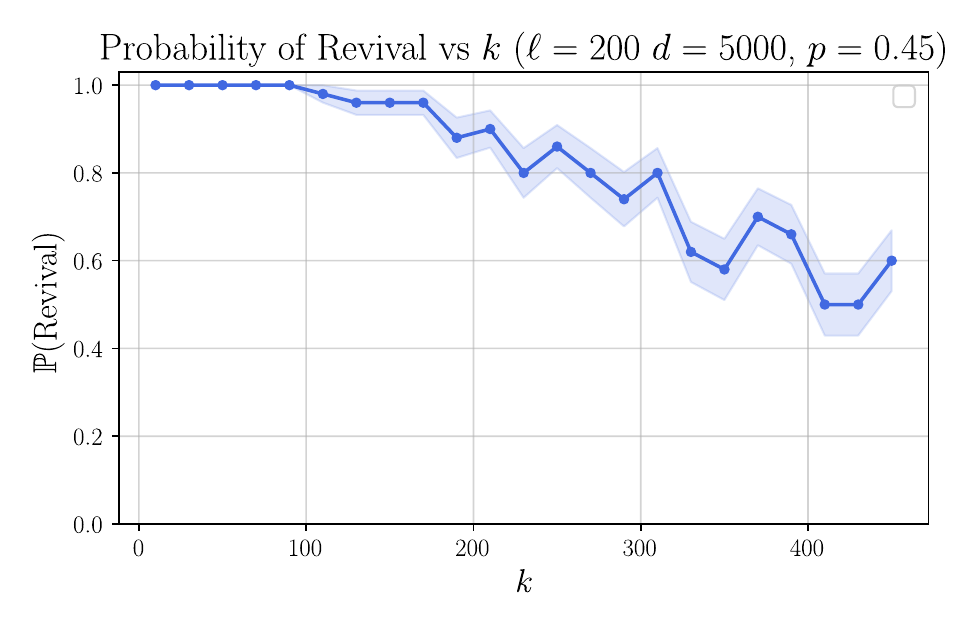}
        \caption{}
        \label{fig:m}
    \end{subfigure}
    \hfill
    \begin{subfigure}{0.46\textwidth}
        \centering
        \includegraphics[width=\linewidth]{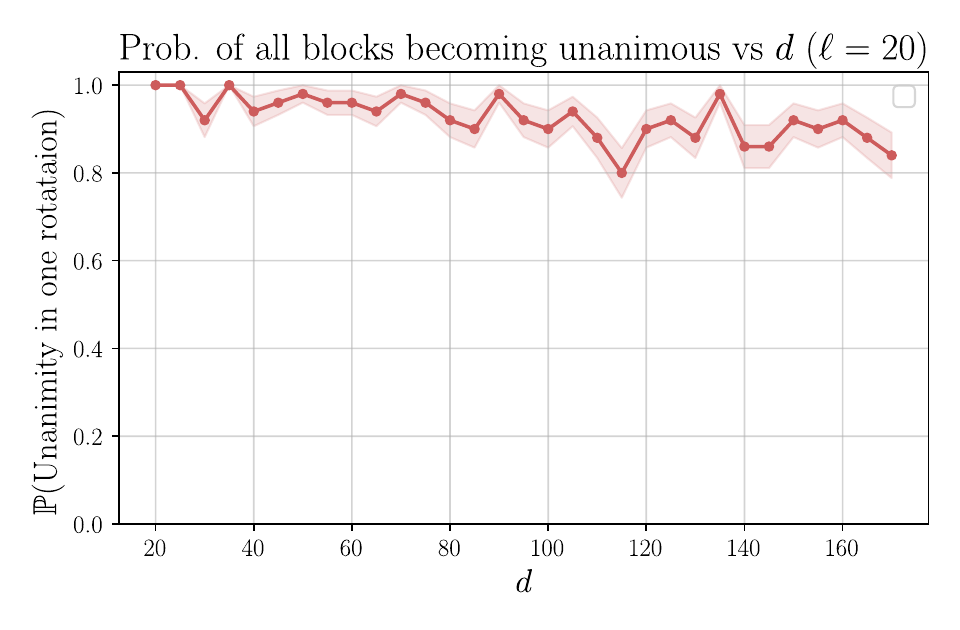}
        \caption{}
        \label{fig:n}
    \end{subfigure}
    \caption{(a) Recovery probability on a sparse block cycle versus number of adversarial neurons, showing robustness. (b) Probability of convergence to a state where each neuron in any given block has the same sign before one full rotation, starting from a random configuration with fixed cycle length $\ell=20$. The mild decline with $d$ reflects that the theorem requires the cycle length to grow at least logarithmically with $d$.}
    \label{fig:add_expt}
\end{figure}

\subsection{Spectral Embedding Plots}
\label{sub_app:spectral_embed}
To provide a geometric view on our architecture, we present spectral embeddings of representative sparse block-cyclic architectures augmented with adversarial connections, capturing the network’s connectivity structure. Nodes are embedded via the leading eigenvectors of the graph Laplacian, which expose the model’s underlying organization. As shown in Figure \ref{fig:spect}, this coarse structure remains largely preserved even after introducing adversarial edges.

\begin{figure}[H]
    \centering
    \begin{subfigure}{0.46\textwidth}
        \centering
        \includegraphics[width=\linewidth]{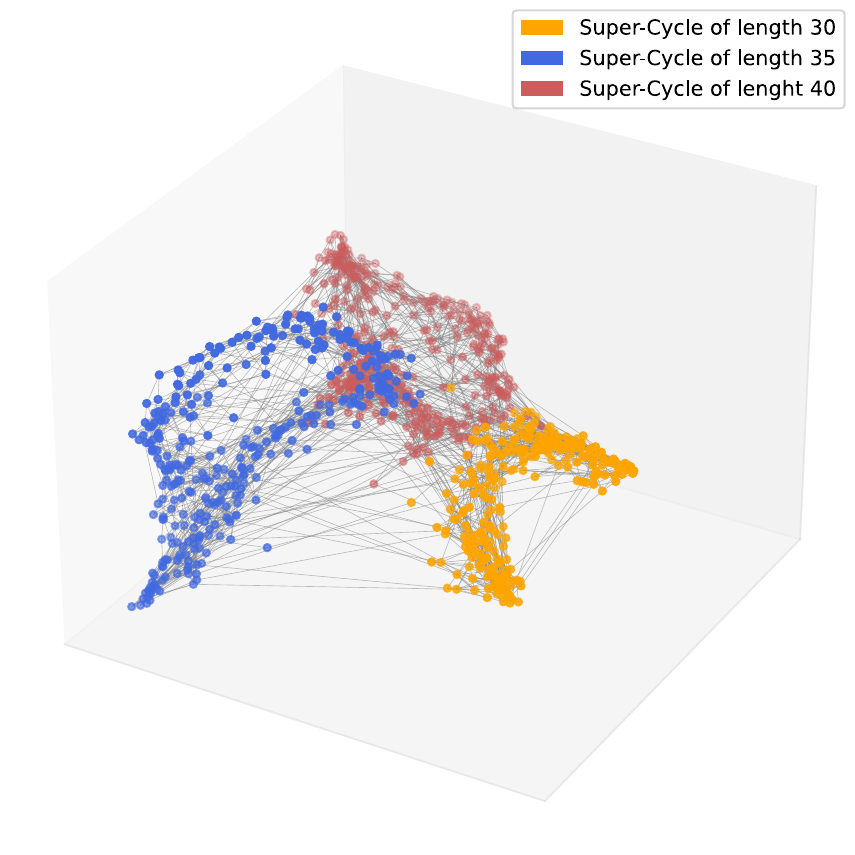}
        \caption{}
        \label{fig:r}
    \end{subfigure}
    \hfill
    \begin{subfigure}{0.46\textwidth}
        \centering
        \includegraphics[width=\linewidth]{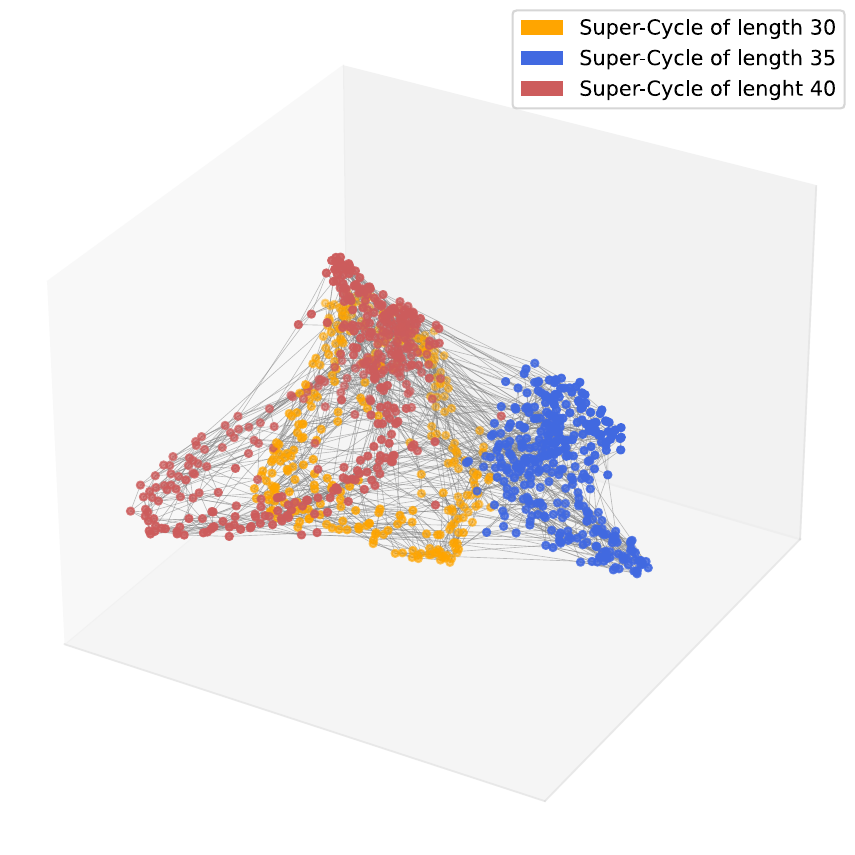}
        \caption{}
        \label{fig:k}
    \end{subfigure}

    \vspace{0.5em}

    \begin{subfigure}{0.46\textwidth}
        \centering
        \includegraphics[width=\linewidth]{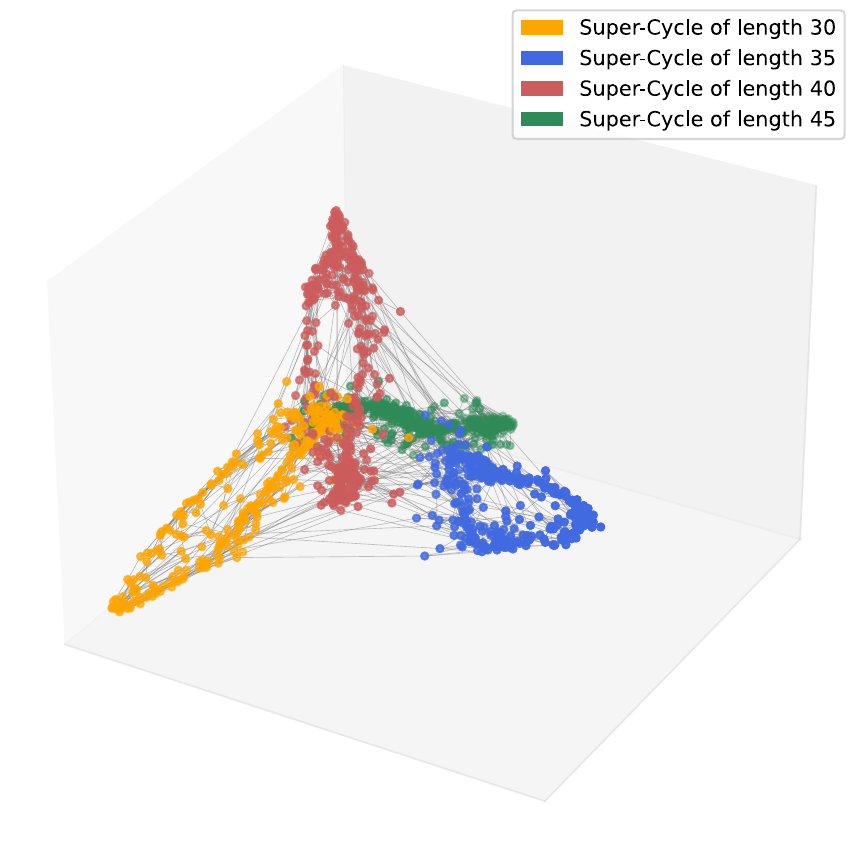}
        \caption{}
        \label{fig:q}
    \end{subfigure}
    \hfill
    \begin{subfigure}{0.46\textwidth}
        \centering
        \includegraphics[width=\linewidth]{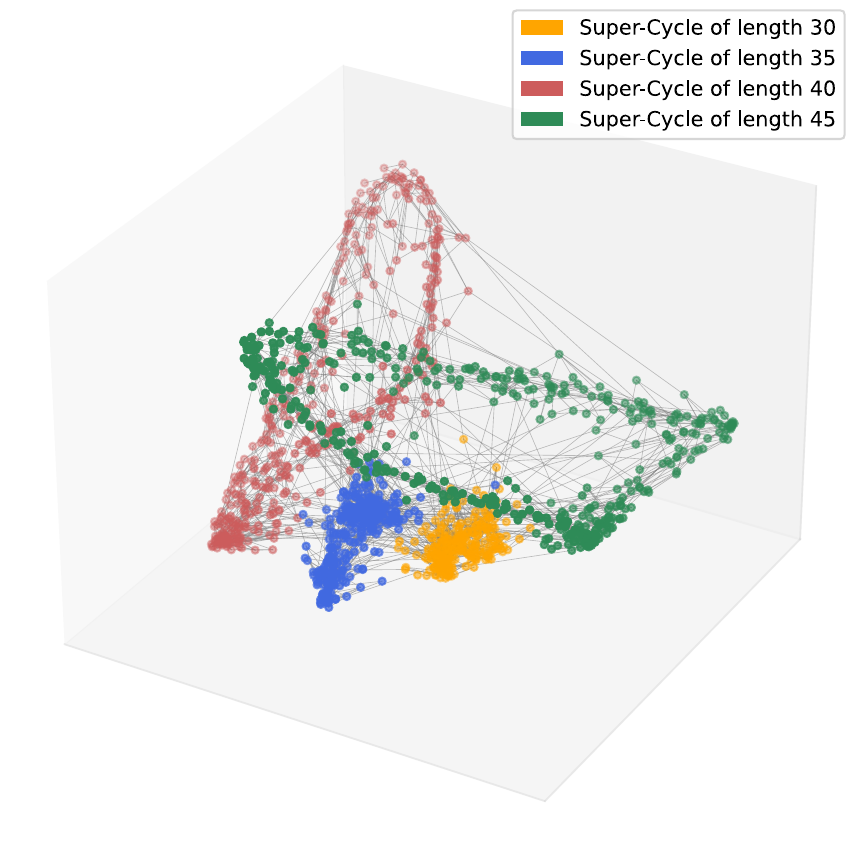}
        \caption{}
        \label{fig:e}
    \end{subfigure}

    \caption{Spectral embeddings of two sparse block-cyclic architectures with adversarial connections: (a) \& (b) block-cycle lengths $30$, $35$, and $40$; (c) \& (d) lengths $30$, $35$, $40$, and $45$. In all panels, each block has size $d=10$ and each neuron receives $h=3$ incoming edges from the preceding block.}
    \label{fig:spect}
\end{figure}

\section{Acknowledgments}
\label{app:ack}
This research has been supported by the French government National Research Agency (ANR) through the UCA JEDI (ANR-15-IDEX-01), EUR DS4H (ANR-17-EURE-004), and the 3IA Côte d’Azur Investments in the Future project with the reference number ANR-23-IACL-0001.

\end{document}